\icmltitlerunning{Kernel Recursive ABC}
\newcommand{\Y}{{\mathcal{Y}}}
\renewcommand{\H}{{\mathcal{H}}}
\newcommand{\R}{\mathbb{R}}
\newtheorem{proposition}{Proposition} 
\newtheorem{remark}{Remark}
\newtheorem{corollary}{Corollary}
\newtheorem{assumption}{Assumption}
\newcommand{\argmax}{\operatornamewithlimits{argmax}}
\newcommand{\argmin}{\operatornamewithlimits{argmin}}
\begin{document}

\twocolumn[
\icmltitle{Kernel Recursive ABC: Point Estimation with Intractable Likelihood}



\icmlsetsymbol{equal}{*}

\begin{icmlauthorlist}
\icmlauthor{Takafumi Kajihara}{nec,aist}
\icmlauthor{Motonobu Kanagawa}{mpi}
\icmlauthor{Keisuke Yamazaki}{aist}
\icmlauthor{Kenji Fukumizu}{ism}
\end{icmlauthorlist}

\icmlaffiliation{nec}{NEC Corporation}
\icmlaffiliation{aist}{National Institute of Advanced Industrial Science and Technology}
\icmlaffiliation{mpi}{Max Planck Institute for Intelligent Systems}
\icmlaffiliation{ism}{The Institute of Statistical Mathematics}
\icmlcorrespondingauthor{Takafumi Kajihara}{t-kajihara@ct.jp.nec.com}


\vskip 0.3in

]

\printAffiliationsAndNotice{} 

\begin{abstract}
We propose a novel approach to parameter estimation for simulator-based statistical models with intractable likelihood.
Our proposed method involves recursive application of kernel ABC and kernel herding to the same observed data. 
We provide a theoretical explanation regarding why the approach works, showing (for the population setting) that, under a certain assumption, point estimates obtained with this method converge to the true parameter, as recursion proceeds.
We have conducted a variety of numerical experiments, including parameter estimation for a real-world pedestrian flow simulator, and show that in most cases our method outperforms existing approaches.
\end{abstract}

\section{Introduction} \label{sec:introduction}
Inference of parameters in a probabilistic model is an essential ingredient in model-based statistical approaches, both in the frequentist and Bayesian paradigms. 
Given a probabilistic model $P(y|\theta)$, which is a conditional distribution of observations $y$ given a parameter $\theta$, the aim is to make inference about the parameter $\theta^*$ that generated an observed data $y^*$.
When the model $P(y|\theta)$ admits a conditional density $\ell(y|\theta)$, such an inference can be made on the basis of evaluations of $\ell(y^*|\theta)$; this is the {\em likelihood} of $y^*$ as a function of $\theta$.
However, in modern scientific and engineering problems in which the model $P(y|\theta)$ is required to be sophisticated and complex, the likelihood function $\ell(y^*|\theta)$ might no longer be available.
This may be because the density form of $P(y|\theta)$ is elusive, or the evaluation of the likelihood $\ell(y^*|\theta)$ is computationally very expensive. 
Such situations, in which $\ell(y|\theta)$ (or $P(y|\theta)$) are referred to as {\em intractable likelihood}, make the inference problem quite challenging and are commonly found in the literature on population genetics \citep{pritchard1999population} and dynamical systems \citep{toni2009approximate}, to name just two.

{\em Approximate Bayesian Computation} (ABC) is a class of computational methods for Bayesian inference with intractable likelihood \citep{tavare1997inferring,pritchard1999population,beaumont2002approximate} that is applicable as long as {\em sampling} from the model $P(y|\theta)$ is possible.
Given a prior $\pi(\theta)$ on the parameter space, the basic ABC constructs a Monte Carlo approximation to the posterior $P_{y^*}(\theta) \propto P(y^*|\theta) \pi(\theta)$ in the following way:
i) sample pairs $(y_i,\theta_i)$ of pseudo data $y_i$ and parameter $\theta_i$ from the joint distribution $P(y|\theta)\pi(\theta)$, where $i = 1,\dots,n$ for some $n \in \mathbb{N}$, ii) maintain only those parameters $\theta_i$ associated with $y_i$ that are ``close enough'' to the observed data $y^*$, and iii) regard them as samples from the posterior $P_{y^*}(\theta)$.
ABC has been extensively studied in statistics and machine learning; see, e.g.,~ \citet{del2012adaptive,fukumizu2013kernel,meeds2014gps,pmlr-v51-park16,mitrovic2016dr}.


In this paper, we rather take the frequentist perspective, and deal with the problem of {\em maximum likelihood estimation (MLE)} with intractable likelihood.
That is, we consider situations in which one believes that there is a ``true'' parameter $\theta^*$ that generated the data $y^*$ and wishes to obtain a point estimate for it.
This problem is also motivated by the following situations encountered in practice: 
1) Consider a situation in which the model is computationally expensive (e.g., a state-space model) and one wants to perform prediction based on it. 
In this case fully Bayesian prediction would require simulation from each of sampled parameters, which might be quite costly. 
If one has a point estimate of the true parameter $\theta^*$, then the computational cost can be drastically reduced.
2) Consider a situation in which one only has limited knowledge w.r.t. model parameters.
In this case, it is generally difficult to specify an appropriate prior distribution over the parameter space, and thus the resulting posterior may not be reliable.\footnote{For point estimation, one may think of using the maximum a posterior (MAP) estimate, but it may again be unreliable (as for the posterior distribution itself), if the prior distribution cannot be specified appropriately.}
Methods for point estimation with intractable likelihood have been reported in the literature, including the method of simulated-moments \citep{mcfadden1989method}, indirect inference \citep{gourieroux1993indirect}, ABC-based MAP estimation \citep{rubio2013simple}, noisy ABC-MLE \citep{dean2014parameter,yildirim2015parameter}, an approach based on Bayesian optimization \citep{gutmann2016bayesian}, and data-cloning ABC \citep{picchini2017approximate}.
We will discuss these existing approaches in Sec.~\ref{sec:experiments}.

Our contribution is in proposing a novel approach to point estimation with intractable likelihood on the basis of {\em kernel mean embedding of distributions}  \citep{MuaFukSriSch17}, a framework for statistical inference using reproducing kernel Hilbert spaces. 
Specifically, our approach extends {\em kernel ABC} \citep{fukumizu2013kernel,nakagome2013kernel}, a method for ABC using kernel embedding of conditional distributions \citep{song2009hilbert,SonFukGre13}, to point estimation with intractable likelihood.
The novelty lies in combining kernel ABC with {\em kernel herding} \citep{chen2010super}, a deterministic sampling method similar to quasi-Monte Carlo \citep{DicKuoSlo13}, and in applying these two methods iteratively to the same observed data in a recursive way.
We term this approach {\em kernel recursive ABC}.
A theoretical explanation will be provided for this approach, discussing how such recursion yields a point estimate for the true parameter.
We also discuss that the combination of kernel ABC and kernel herding leads to robustness against misspecification of a prior for the true parameter; this is an advantage over existing methods, and will be demonstrated experimentally.

This paper is organized as follows.
We briefly review kernel ABC and kernel herding in Sec.~\ref{sec:background} and propose kernel recursive ABC in Sec.~\ref{sec:proposed}.
We report experimental results of comparisons with existing methods in Sec.~\ref{sec:experiments}.
The experiments include parameter estimation for a real-world pedestrian flow simulator \citep{Noda2010p}, which may be of independent interest as application.

\section{Background} \label{sec:background}

\subsection{Kernel ABC} \label{sec:kernelABC}
Kernel ABC is an algorithm that executes ABC in a reproducing kernel Hilbert space (RKHS) and produces a reliable solution even in moderately large dimensional problems \citep{fukumizu2013kernel,nakagome2013kernel}. 
It is based on the framework of {\em kernel mean embeddings}, in which all probability measures are represented as elements in an RKHS (see \citet{MuaFukSriSch17} for a recent survey of this field). 
Let $\Theta$ be a measurable space, $k: \Theta \times \Theta \to \mathbb{R}$ be a measurable positive definite kernel, and $\mathcal{H}$ be its RKHS. 
In this framework, any probability measure $P$ on $\Theta$ will be represented as a Bochner integral
\begin{equation} \label{eq:kernel_mean}
\mu_P := \int_\Theta k(\cdot, \theta)dP(\theta) \in \mathcal{H},
\end{equation}
which is called the {\em kernel mean} of $P$.
If the mapping $P \to \mu_P$ is injective, in which case $\mu_P$ preserves all the information in $P$, the kernel $k$ is referred to as being {\em characteristic} \citep{fukumizu2008kernel}.
Characteristic kernels on $\Theta = \mathbb{R}^d$, for example, include Gaussian and Mat\'ern kernels \citep{SriGreFukSchetal10}.

Let $\mathcal{Y}$ be another measurable space and assume that an observed data $y^*\in \mathcal{Y}$ is provided. ($y^*$ is often a set of sample points.)
Given a conditional probability $P(y|\theta)$ and a prior $\pi(\theta)$, we wish to obtain the posterior distribution $P_{y^*}(\theta) \propto P(y^*|\theta)\pi(\theta)$.\footnote{There is abuse of notation here, as $P(y|\theta)$ does not denote a conditional density but a conditional distribution.}
As in a standard ABC, kernel ABC achieves this by first generating pairs of pseudo data and parameter $\{ (y_i,  \theta_{i}) \}_{i=1}^{n}$ from the joint distribution $P(y |\theta)\pi(\theta)$. 
It then estimates the kernel mean of the posterior $P_{y^*}$, which we denote by 
$$\mu_{P_{y^*}} := \int_\Theta k (\cdot,\theta)dP_{y^*}(\theta) \in \mathcal{H}.$$
Given a measurable positive definite kernel $k_{\mathcal{Y}}$ on $\mathcal{Y}$, the estimator is given by
\begin{eqnarray}
\hat{\mu}_{P_{y^*}} &=& \sum^{n}_{i=1}w_i k(\cdot , \theta_i) \in \mathcal{H}, \label{eq:cond_kmean} \\
{\bm w} &:=& (w_1,\dots,w_n)^T := (G + n \delta I)^{-1} {\bm k}(y^*), \label{eq:KRR_weight}
\end{eqnarray}
where ${\bm k}(y^*) := (k_\mathcal{Y}(y_1,y^*), \dots, k_\mathcal{Y}(y_n, y^*))^T \in \mathbb{R}^n$, $G := (k_\mathcal{Y}(y_i,y_j))_{i,j=1}^n \in \mathbb{R}^{n \times n}$, $\delta > 0$ is a regularization constant, and $I \in \mathbb{R}^{n \times n}$ is an identity matrix. 
The estimator (\ref{eq:cond_kmean}) is essentially an (RKHS-valued) kernel ridge regression \citep{GruLevBalPatetal12}: Given {\em training} data $\{ (y_i,  k(\cdot,\theta_{i}) ) \}_{i=1}^{n}$, the weights (\ref{eq:KRR_weight}) provide an estimator for the mapping $y^* \Rightarrow k(\cdot,\theta^*)$.
For consistency and convergence results, which require $\delta \to 0$ as $n \to \infty$, we re refer to \citet{fukumizu2013kernel} and \citet{MuaFukSriSch17}.

\subsection{Kernel herding} \label{sec:kernelherding}

Kernel herding is a deterministic sampling technique based on the kernel mean representation of a distribution \citep{chen2010super} and can be seen as a greedy approach to quasi-Monte Carlo \citep{DicKuoSlo13}.
Consider sampling from $P$ using the kernel mean $\mu_P$ (\ref{eq:kernel_mean}), and assume that one is able to evaluate function values of $\mu_P$.
Kernel herding greedily obtains sample points $\theta_1,\theta_2,\dots, \theta_n$ by iterating the following steps: Defining $h_{0} := \mu_P$, 
\begin{eqnarray}
\theta_{t+1} &=& \argmax_{\theta \in \Theta}  h_t(\theta), \label{eq:herding1}\\
h_{t+1} &=& h_t + \mu_P - k(\cdot , \theta_{t+1}) \in \mathcal{H}, \label{eq:herding2}
\end{eqnarray}
where $t = 0,\dots,n-1$.
\citet{chen2010super} has shown that, if there exists a constant $C > 0$ such that $k(\theta, \theta) = C$ for all $\theta \in \Theta$, this procedure will be identical to the greedy minimization of the maximum mean discrepancy (MMD) \citep{gretton2007kernel,gretton2012kernel}:
\begin{equation} \label{eq:MMD}
\epsilon_n := \left\| \mu_P - \frac{1}{n}\sum^n_{t=1} k(\cdot, \theta_t) \right\|_\mathcal{H}, 
\end{equation}
where $\| \cdot \|_\mathcal{H}$ denotes the norm of $\H$.
That is, the points $\theta_1,\dots,\theta_n$ are obtained so as to (greedily) minimize the distance $\varepsilon_n$ between $\mu_P$ and the empirical kernel mean $\frac{1}{n}\sum^n_{t=1}k(\cdot , \theta_t)$.
The generated points $\theta_1,\dots,\theta_n$ are also called super-samples because they are more informative than those from random sampling; this is in the sense that error decreases at the rate $\epsilon_{n} = O(n^{-1})$ if the RKHS is finite-dimensional \citep{bach2012equivalence}, which is faster than the rate $\epsilon_{n} = O(n^{-1/2})$ of random sampling \citep{SmoGreSonSch07}.
Convergence guarantees are also provided even when the optimization problem in (\ref{eq:herding1}) is solved approximately \citep{LacLinBac15} and when the kernel mean $\mu_P$ is replaced by an empirical estimate $\hat{\mu}_P$ of the form (\ref{eq:cond_kmean})  \citep{kanagawa2016filtering}.
Note that the decay $\epsilon_n \to 0$ of the error (\ref{eq:MMD}) as $n \to \infty$ implies the convergence of expectation $\frac{1}{n} \sum_{t=1}^n f(\theta_t) \to \int f(x)dP(x)$ for all functions $f$ in the RKHS $\H$ and for functions $f$ that can be approximated well by the RKHS functions \citep{KanSriFuk16}.

\section{Proposed method} \label{sec:proposed}
\label{sec:proposed-algorithm}

\begin{algorithm}[tb]
   \caption{Kernel Recursive ABC}
   \label{kr-abc}
\begin{algorithmic}
   \STATE {\bfseries Input:} A prior distribution $\pi$, an observed data $y^*$, a data generator $P(y|\theta)$, the number  $N_{\rm iter}$ of iterations, the number $n$ of simulated pairs, a kernel $k$ on $\Theta$, a kernel $k_\mathcal{Y}$ on $\mathcal{Y}$, and a regularization constant $\delta > 0$
   \STATE {\bfseries Output:} A point estimate $\acute{\theta}$.
   \FOR{\texttt{$N = 1,..., N_{\rm iter}$}}
   \IF{$N=1$}{
    	\FOR{\texttt{$i = 1,..., n$}}
        \STATE
    	Sample $\theta_{1,i} \sim \pi (\theta)$ i.i.d.
        \ENDFOR
        }    
    \ENDIF 
    \FOR{\texttt{$i = 1,..., n$}}
    \STATE Generate $y_{N,i} \sim P(\cdot | 		\theta_{N,i})$
    \ENDFOR   
    \STATE Compute $G := (k_\mathcal{Y}(y_{N,i}, y_{N,i}))_{i,j=1}^n \in \mathbb{R}^{n \times n}$ and ${\bm k}(y) := (k_\mathcal{Y}(y_{N,i}, y^*))_{i=1}^n \in \mathbb{R}^n$.
    \STATE Calculate ${\bm w} = (w_1,\dots,w_n)^T \in \mathbb{R}^n$ by Eq.(\ref{eq:KRR_weight}).
    \STATE Construct a kernel mean estimate of the powered posterior $\hat{\mu}_{P_N} := \sum_{i=1}^{n} w_i k(\cdot, \theta_{N,i})$
    \STATE Sample $\{\theta_{N+1,t}\}_{t=1}^n$ by performing kernel herding Eqs.(\ref{eq:herding1}) (\ref{eq:herding2}) with $\mu_P := \hat{\mu}_{P_N}$. 
\ENDFOR
\STATE Obtain a point estimate $\acute{\theta} := \theta_{N_{\rm iter}+1,1}$
\end{algorithmic}
\end{algorithm}

Our idea is to recursively apply Bayes' rule to the same observed data $y^*$ by using the posterior obtained in one iteration as a prior for the next iteration.
For this, let $\ell(\theta) := \ell(y^*|\theta)$ be a likelihood function and $\pi(\theta)$ be a prior density, where $\theta \in \Theta$, with $\Theta$ being a measurable space.
Consider the population setting in which no estimation procedure is involved. 
After the $N$-th recursion, the posterior distribution becomes
\begin{equation}  \label{eq:power_post} 
p_N (\theta) := C_N^{-1}\ \pi(\theta) (\ell(\theta))^N,
\end{equation}
where $C_N := \int_\Theta \pi(\theta) \left( \ell(\theta) \right)^N d\theta$ is a normalization constant.
We refer here to this as a {\em powered posterior}.
If $\ell$ has a unique global maximum at $\theta_{\infty} \in \Theta$ and the support of $\pi$ contains $\theta_\infty$, one can show that $p_N$ converges weakly to the Dirac distribution $\delta_{\theta_{\infty}}$ at $\theta_\infty$ under certain conditions \citep{LelNadSch10}.
In other words, the effect of the prior diminishes as the recursion proceeds, and the powered posterior degenerates at the maximum likelihood point, providing a method for MLE.
A similar idea has been discussed by \citet{doucet2002marginal,LelNadSch10} in the context of data augmentation and data cloning, in which one replicates the observed data $y^*$ multiple times and applies Bayes' rule once; our approach is different, as we employ recursive applications of Bayes' rule multiple times (this turns out to be beneficial in our approach, as is shown below).

Based on the above idea, we propose to recursively applying kernel ABC (Sec.~\ref{sec:kernelABC}) and kernel herding (Sec.~\ref{sec:kernelherding}).
Specifically, the proposed method (Algorithm \ref{kr-abc}) iterates the following procedures: 
(i) At the $N$-th iteration, the kernel mean $\mu_{P_N} := \int k(\cdot,\theta)p_N(\theta)d\theta$ of the powered posterior (\ref{eq:power_post}) is estimated using simulated pairs $\{(\theta_{N,i},y_{N,i})\}_{i=1}^n$ via kernel ABC; 
(ii) from the estimate $\hat{\mu}_{P_N}$ of $\mu_{P_N}$ given in (i), new parameters $\{ \theta_{N+1,i} \}_{i=1}^n$ are generated via kernel herding, and new pseudo-data $\{ y_{N+1,i} \}_{i=1}^n$ are generated from the simulator $P(y_{N+1,i}|\theta_{N+1,i})$ in the $N+1$-th iteration.
After iterating these procedures $N_{\rm iter}$ times, point estimate $\acute{\theta}$ for the true parameter is given as the first point $\theta_{N_{\rm iter}+1,1}$ from kernel herding at the last iteration.

\vspace{-0.3cm}
\paragraph{Auto-correction mechanism.}
An interesting feature of the proposed approach is that, as experimentally indicated in Sec.~\ref{sec:Gauss-misspecified}, it is equipped with an auto-correction mechanism: If the parameters $\theta_{N,1},\dots,\theta_{N,n}$ at the $N$-th iteration are far apart from the true parameter $\theta^*$, then Algorithm \ref{kr-abc} searches for the parameters $\theta_{N+1,1},\dots,\theta_{N+1,n}$ at the next iteration, so as to explore the parameter space $\Theta$.
For instance, if the prior $\pi(\theta)$ is misspecified, meaning that the true parameter $\theta^*$ is not contained in the support of $\pi(\theta)$, then the initial parameters $\theta_{1,1},\dots,\theta_{1,n}$ from  $\pi(\theta)$ are likely to be apart from the true parameter $\theta^*$.
The auto-correction mechanism makes the proposed method robust to such misspecification and makes it suitable for use in situations in which one lacks appropriate prior knowledge about the true parameter.

To explain how this works, let us explicitly write down the procedure (\ref{eq:herding1}) (\ref{eq:herding2}) of kernel herding as used in Algorithm \ref{kr-abc}.
Given that $t\ (< n)$ points $\theta_{N+1,1},\dots,\theta_{N+1,t}$ have already been generated, the next point $\theta_{N+1,t+1}$ is obtained as 
\begin{eqnarray} \label{eq:herding_alg}
&&\theta_{N+1,t+1} := \\ 
&& \argmax_{\theta \in \Theta} \sum_{i=1}^{n} w_i k(\theta, \theta_{N,i}) -  \dfrac {1}{t+1}\sum^t_{i=1}k(\theta , \theta_{N+1,i}), \nonumber 
\end{eqnarray}
where the weights $w_1,\dots,w_n$ are given as (\ref{eq:KRR_weight}).
Assume that all the simulated parameters $\theta_{N,1},\dots,\theta_{N,n}$ at the $N$-th iteration are far apart from the true parameter $\theta^*$: If $N=1$, these are the parameters sampled from the prior $\pi(\theta)$.
Then it is likely the resulting simulated data $y_{N,1},\dots,y_{N,n}$ are dissimilar to the observed data $y^*$.
In this case, each component of the vector ${\bm k}(y) := (k_\mathcal{Y}(y_{N,i}, y^*))_{i=1}^n \in \mathbb{R}^n$  becomes nearly $0$, since $k_\mathcal{Y}(y_{N,i}, y^*)$ quantifies the similarity between $y^*$ and $y_{N,i}$.
As a result, each of the weights $w_1,\dots,w_n$ given by kernel ABC (\ref{eq:KRR_weight}) also become nearly $0$, and thus the first term on the right side in (\ref{eq:herding_alg}) will be ignorable.
The point $\theta_{N+1,{t+1}}$ is then obtained so as to roughly maximize the second term $-\frac{1}{t+1} \sum_{i=1}^t k(\theta_{N+1,{t+1}},\theta_{N+1,i})$, or, equivalently, so as to minimize $ \sum_{i=1}^t k(\theta_{N+1,{t+1}},\theta_{N+1,i})$.
Since the kernel $k(\theta_{N+1,{t+1}}, \theta_{N+1,i})$ measures the similarity between $\theta_{N+1,{t+1}}$ and $\theta_{N+1,i}$, the new point $\theta_{N+1,{t+1}}$ is located apart from the points $\theta_{N+1,1},\dots,\theta_{N+1,t}$ generated so far.
In this way, the parameters $\theta_{N+1,1},\dots,\theta_{N+1,n}$ at the $N+1$-th iteration are made to explore the parameter space $\Theta$ if parameters $\theta_{N,1},\dots,\theta_{N,n}$ at the $N$-th iteration are far apart from the true parameter $\theta^*$.

\subsection{Theoretical analysis} \label{sec:theory}
We provide here a theoretical basis for the proposed recursive approach.
Since the consistency of kernel ABC and kernel herding have already been established in the literature \citep{fukumizu2013kernel,bach2012equivalence}, we focus  on convergence analysis in the population setting, that is, convergence analysis for the kernel mean of the powered posterior (\ref{eq:power_post}) and for the resulting point estimate.
We nevertheless note that convergence analysis of the overall procedure of Algorithm \ref{kr-abc} remains an important topic for future research.
All the proofs can be found in the Supplementary Materials.

Below, we let $\Theta$ be a Borel measurable set in $\R^d$.
Denote by $P_N$ the probability measure induced by the powered posterior density $p_N$ (\ref{eq:power_post}), and let $\mu_{P_N} := \int k(\cdot,\theta)dP_N(\theta) \in \mathcal{H}$ be its kernel mean, where $k$ is a kernel on $\Theta$ and $\mathcal{H}$ is its RKHS. 
We require the following assumption for the likelihood function $\ell$ and the prior $\pi$ for theoretical analysis.
\begin{assumption} \label{as:likelihood_prior}
(i) $\ell$ has a unique global maximum at $\theta_\infty \in \Theta$, and $\pi(\theta_\infty) > 0$;
(ii) $\pi$ is continuous at $\theta_\infty$, $\ell$ has continuous second derivatives in the neighborhood of $\theta_\infty$, and the Hessian of $\ell$ at $\theta_\infty$ is strictly negative-definite.
\end{assumption}

Our first result below shows that, under Assumption \ref{as:likelihood_prior}, the powered posterior $P_N$ (\ref{eq:power_post}) converges to the Dirac distribution $\delta_{\theta_\infty}$ in the RKHS $\mathcal{H}$ as $N \to \infty$; 
this provides a theoretical basis for recursively applying the kernel ABC.
\begin{proposition} \label{prop:convergence}
Let $\Theta \subset \R^d$ be a Borel measurable set and $k: \Theta \times \Theta \to \R$ be a continuous bounded kernel.
Under Assumption \ref{as:likelihood_prior}, we have 
$\lim_{N \to \infty} \left\| \mu_{P_N} - k(\cdot,\theta_\infty) \right\|_{\mathcal{H}} = 0$.
\end{proposition}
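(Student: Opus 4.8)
The plan is to separate the argument into two parts: (a) an analytic concentration statement, namely that $P_N\big(\{\theta : \|\theta-\theta_\infty\|\ge\varepsilon\}\big)\to 0$ as $N\to\infty$ for every $\varepsilon>0$ (equivalently, $P_N$ converges weakly to $\delta_{\theta_\infty}$), and (b) the passage from this concentration to convergence of the kernel means in $\mathcal{H}$. Part (b) is short and is where the boundedness and continuity of $k$ enter; part (a) is the analytic core and is essentially the Laplace-approximation phenomenon already alluded to in the text via \citet{LelNadSch10}.

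For part (a), I would write $P_N(A)=\left(\int_\Theta \pi(\theta)\ell(\theta)^N d\theta\right)^{-1}\int_A \pi(\theta)\ell(\theta)^N d\theta$. Using Assumption~\ref{as:likelihood_prior}(ii), a second-order Taylor expansion of $\ell$ around the interior point $\theta_\infty$ (where $\nabla\ell(\theta_\infty)=0$) together with $\pi(\theta_\infty)>0$ and continuity of $\pi$ there yields a lower bound $\int_\Theta \pi\ell^N \ge c_1\, \ell(\theta_\infty)^N N^{-d/2}$ for large $N$; the strict negative-definiteness of the Hessian is exactly what makes this Gaussian-type integral nondegenerate. For the numerator over $A_\varepsilon:=\{\|\theta-\theta_\infty\|\ge\varepsilon\}$ one splits $A_\varepsilon$ into a bounded annulus near $\theta_\infty$, on which compactness plus uniqueness of the global maximum give $\sup\ell\le \ell(\theta_\infty)-\delta_1$ for some $\delta_1>0$, and a far region; on each piece $\int\pi\ell^N$ is exponentially (in $N$) smaller than $\ell(\theta_\infty)^N N^{-d/2}$, whence $P_N(A_\varepsilon)\to 0$. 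I expect controlling the far region to be the main obstacle: it requires that the maximum be well separated (e.g.\ $\sup_{\|\theta-\theta_\infty\|\ge r}\ell(\theta)<\ell(\theta_\infty)$ for each $r>0$, or a tail/coercivity condition on $\pi\ell$), which is part of the conditions of \citet{LelNadSch10}. I would either cite their result directly or complement Assumption~\ref{as:likelihood_prior} with such a separation/tail hypothesis.

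For part (b), since $P_N$ is a probability measure and $k$ is bounded, $\mu_{P_N}$ is a well-defined Bochner integral and, because $\int_\Theta k(\cdot,\theta_\infty)\,dP_N(\theta)=k(\cdot,\theta_\infty)$, we have $\mu_{P_N}-k(\cdot,\theta_\infty)=\int_\Theta\big(k(\cdot,\theta)-k(\cdot,\theta_\infty)\big)\,dP_N(\theta)$. Applying the triangle inequality for the Bochner integral,
\[
\left\|\mu_{P_N}-k(\cdot,\theta_\infty)\right\|_{\mathcal{H}}\le \int_\Theta\left\|k(\cdot,\theta)-k(\cdot,\theta_\infty)\right\|_{\mathcal{H}}\,dP_N(\theta).
\]
By the reproducing property, $\|k(\cdot,\theta)-k(\cdot,\theta_\infty)\|_{\mathcal{H}}^2=k(\theta,\theta)-2k(\theta,\theta_\infty)+k(\theta_\infty,\theta_\infty)$, which is continuous in $\theta$ (continuity of $k$), vanishes at $\theta_\infty$, and is bounded above by $4\sup_\theta k(\theta,\theta)<\infty$. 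Hence, given $\eta>0$, continuity provides $\varepsilon>0$ with $\|k(\cdot,\theta)-k(\cdot,\theta_\infty)\|_{\mathcal{H}}<\eta$ on $\{\|\theta-\theta_\infty\|<\varepsilon\}$, and splitting the integral over this ball and $A_\varepsilon$ gives $\|\mu_{P_N}-k(\cdot,\theta_\infty)\|_{\mathcal{H}}\le \eta+2\sqrt{\sup_\theta k(\theta,\theta)}\;P_N(A_\varepsilon)$. Letting $N\to\infty$ and invoking part (a), then letting $\eta\downarrow 0$, finishes the proof.
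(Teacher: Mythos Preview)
Your proposal is correct. Part (a) coincides with the paper's approach: the paper simply states that Assumption~\ref{as:likelihood_prior} is equivalent to Assumptions A1--A3 of \citet{LelNadSch10} and cites their Corollary to Lemma~A.2 to obtain weak convergence $P_N\Rightarrow\delta_{\theta_\infty}$, without writing out the Laplace details you sketch. Your worry about the far region is legitimate and is precisely why the paper outsources this step; your suggestion to cite directly is exactly what is done.

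Part (b) is where you differ. The paper expands $\|\mu_{P_N}-k(\cdot,\theta_\infty)\|_{\mathcal H}^2$ into the three MMD terms $\iint k\,dP_N\,dP_N - 2\int k(\theta,\theta_\infty)\,dP_N + k(\theta_\infty,\theta_\infty)$ and passes to the limit in each, invoking Billingsley's product-measure result (Theorem~2.8(ii)) for the double integral. You instead bound the norm by the Bochner integral $\int\|k(\cdot,\theta)-k(\cdot,\theta_\infty)\|_{\mathcal H}\,dP_N(\theta)$ and split over $\{\|\theta-\theta_\infty\|<\varepsilon\}$ and its complement. Your route is slightly more elementary: it needs only the concentration statement $P_N(A_\varepsilon)\to 0$ (convergence in probability to a constant) rather than full weak convergence plus a product-measure theorem, and it avoids touching the cross term $\iint k\,dP_N\,dP_N$ altogether. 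The paper's route, on the other hand, is the standard MMD computation and would generalize directly if the limit were a nondegenerate distribution rather than a Dirac. Both are clean; yours is arguably the shorter path for this specific Dirac target.
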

Proposition \ref{pr:consistency2} below provides a justification for the use of the first point of kernel herding (here this is $\theta_N := \argmin_{\tilde{\theta} \in \Theta} \| \mu_{P_N} - k(\cdot,\tilde{\theta}) \|_{\mathcal{H}}$; see Sec.~\ref{sec:kernelherding}) as a point estimate of $\theta_\infty$.
To this end, we introduce the following assumption on the kernel, which is satisfied by, for example, Gaussian and Mat\'ern kernels.
\begin{assumption} \label{as:kernel}
(i) There exists a constant $C > 0$ such that $k(\theta,\theta) = C$ for all $\theta \in \Theta$.
(ii)  It holds that $k(\theta,\theta') < C$ for all $\theta, \theta' \in \Theta$ with $\theta \neq \theta'$.
\end{assumption}

\begin{proposition}
\label{pr:consistency2}
Let $\Theta \subset \R^d$ be a compact set, and $k: \Theta \times \Theta \to \R$ be a continuous, bounded kernel. 
Let $\theta_N := \argmin_{\tilde{\theta} \in \Theta} \left\lVert \mu_{P_N} - k(\cdot,\tilde{\theta}) \right\rVert_{\mathcal{H}}$.
If Assumptions \ref{as:likelihood_prior} and \ref{as:kernel} hold, then we have $\theta_N \to \theta_{\infty}$ as $N \to \infty$.
\end{proposition}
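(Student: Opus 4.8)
The plan is to combine Proposition~\ref{prop:convergence} with a compactness/continuity argument to upgrade the RKHS convergence $\mu_{P_N} \to k(\cdot,\theta_\infty)$ into convergence of the argmin $\theta_N$. First I would record the key consequence of Assumption~\ref{as:kernel}: since $k(\theta,\theta) = C$ for all $\theta$ and $k(\theta,\theta') < C$ whenever $\theta \neq \theta'$, the map $\theta \mapsto k(\cdot,\theta)$ is a continuous injection of $\Theta$ into the sphere of radius $\sqrt{C}$ in $\mathcal{H}$, and for any $m \in \mathcal{H}$ one has $\| m - k(\cdot,\theta) \|_{\mathcal{H}}^2 = \|m\|_{\mathcal{H}}^2 + C - 2\langle m, k(\cdot,\theta)\rangle_{\mathcal{H}} = \|m\|_{\mathcal{H}}^2 + C - 2 m(\theta)$. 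Hence $\theta_N = \argmin_{\tilde\theta} \| \mu_{P_N} - k(\cdot,\tilde\theta)\|_{\mathcal{H}} = \argmax_{\tilde\theta} \mu_{P_N}(\tilde\theta)$, and likewise $\theta_\infty = \argmax_{\tilde\theta} k(\theta_\infty,\tilde\theta)$ is the \emph{unique} maximizer of $\tilde\theta \mapsto k(\theta_\infty,\tilde\theta)$ on $\Theta$ by part (ii) of Assumption~\ref{as:kernel}. Since $\Theta$ is compact and $k$ is continuous, each $\argmax$ is attained, so $\theta_N$ is well defined.

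Next I would pass from norm convergence of the kernel means to uniform convergence of the corresponding functions. Because $\|f\|_\infty \le \sqrt{\sup_\theta k(\theta,\theta)}\,\|f\|_{\mathcal{H}} = \sqrt{C}\,\|f\|_{\mathcal{H}}$ for every $f \in \mathcal{H}$ (reproducing property plus Cauchy--Schwarz), Proposition~\ref{prop:convergence} gives
\[
\sup_{\theta \in \Theta} \left| \mu_{P_N}(\theta) - k(\theta_\infty,\theta) \right| \le \sqrt{C}\,\left\| \mu_{P_N} - k(\cdot,\theta_\infty) \right\|_{\mathcal{H}} \longrightarrow 0 \quad (N \to \infty).
\]
So the objective functions $\mu_{P_N}(\cdot)$ converge uniformly on the compact set $\Theta$ to the limiting objective $g(\cdot) := k(\theta_\infty,\cdot)$, which has the unique maximizer $\theta_\infty$.

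Finally I would invoke the standard argmax consistency argument. Take any subsequence of $(\theta_N)$; by compactness of $\Theta$ it has a further subsequence converging to some $\bar\theta \in \Theta$. Using uniform convergence of $\mu_{P_N}$ to $g$ together with continuity of $g$, one shows $g(\bar\theta) = \lim \mu_{P_N}(\theta_N) \ge \lim \mu_{P_N}(\theta_\infty) = g(\theta_\infty)$ (the inequality because $\theta_N$ maximizes $\mu_{P_N}$), so $\bar\theta$ is a maximizer of $g$; by uniqueness $\bar\theta = \theta_\infty$. Since every subsequence has a further subsequence converging to $\theta_\infty$, the whole sequence converges: $\theta_N \to \theta_\infty$. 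The main obstacle, and the only place real care is needed, is the uniform-convergence step and the subsequence extraction: one must make sure the $\sup$-norm bound is applied correctly and that the ``$\limsup$ of maxima is at most the max'' / ``$\liminf$ of maxima is at least the max'' inequalities are set up so that the unique-maximizer hypothesis (Assumption~\ref{as:kernel}(ii)) can be used to pin down the limit; compactness of $\Theta$ is what makes the extraction legitimate and is why this proposition strengthens the hypothesis on $\Theta$ relative to Proposition~\ref{prop:convergence}.
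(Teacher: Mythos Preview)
Your proposal is correct and shares the two essential steps with the paper: (i) using Assumption~\ref{as:kernel}(i) to rewrite $\theta_N = \argmax_{\tilde\theta} \mu_{P_N}(\tilde\theta)$, and (ii) converting the RKHS convergence of Proposition~\ref{prop:convergence} into uniform convergence of $\mu_{P_N}(\cdot)$ to $k(\cdot,\theta_\infty)$ via the reproducing property and Cauchy--Schwarz.

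The only difference is in how the final step is executed. You use the standard argmax/M-estimator consistency device: extract a convergent subsequence by compactness, pass to the limit using uniform convergence, and invoke uniqueness of the maximizer of $k(\cdot,\theta_\infty)$ to identify the limit as $\theta_\infty$. The paper instead gives a direct quantitative $\varepsilon$--$\delta$ argument: for each $\varepsilon>0$ it uses compactness and Assumption~\ref{as:kernel}(ii) to find a gap $\delta>0$ with $\max_{\theta\notin U_\varepsilon(\theta_\infty)} k(\theta,\theta_\infty)\le C-\delta$, then shows that once the uniform error is below $\delta/3$ one has $\mu_{P_N}(\theta_\infty)\ge C-\delta/3$ while $\max_{\theta\notin U_\varepsilon(\theta_\infty)}\mu_{P_N}(\theta)\le C-2\delta/3$, forcing $\theta_N\in U_\varepsilon(\theta_\infty)$. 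Your subsequence route is slightly more concise and is the textbook formulation; the paper's route is more explicit and makes the role of the separation in Assumption~\ref{as:kernel}(ii) quantitatively visible (and adapts immediately to the non-compact relaxation noted in their Remark). Either way the substance is the same.
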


We make a few remarks regarding Assumption \ref{as:likelihood_prior}.
The assumption that $\ell$ has a unique global maximum is not satisfied if the model is singular, an example being mixture models: In this case there are multiple global maximums.
However, our experiment in Sec.~\ref{sec:gaussian-mixture} shows that even for mixture models, the proposed method works reasonably well.
This suggests that, in an empirical setting, a point estimate may converge to one of the global maximums.
The assumption $\pi(\theta_\infty) > 0$ will also not be satisfied if the support $\pi$ does not contain $\theta_\infty$, but the proposed method performs well even in this case (as shown in \ref{sec:Gauss-misspecified}), possibly thanks to the auto-correction mechanism explained above. 
We reserve further analysis of these properties for future work.


\section{Experiments} \label{sec:experiments}
We have conducted a variety of experiments comparing the proposed method with existing approaches. 
We begin with a quick review of these approaches (Sec.~\ref{sec:existing-methods}), and report experimental results on point estimation with a misspecified prior (Sec.~\ref{sec:Gauss-misspecified}), population dynamics of the blowfly (Sec.~\ref{sec:blowfly}), alpha stable distributions (Sec.~\ref{sec:alpha-stable}), Gaussian mixture models with redundant components (Sec.~\ref{sec:gaussian-mixture}), and a real-world pedestrian simulator (Sec.~\ref{sec:pedestrian-sim}). 


\subsection{Existing approaches and experimental settings} \label{sec:existing-methods}
{\bf K2-ABC} \citep{pmlr-v51-park16} is an ABC method that represents the empirical distributions of simulated and test observations as kernel means in an RKHS.
For each of simulated parameters, the associated weight is calculated by using the RKHS distance between the kernel means (i.e., MMD), and the resulting weighted sample is treated as a posterior distribution.
{\bf Adaptive SMC-ABC} \citep{del2012adaptive} is a rejection-based approach based on sequential Monte Carlo, which sequentially updates the tolerance level and the associated proposal distribution in an adaptive manner.
This method is a state-of-the-art ABC approach.
The approach by \citet{gutmann2016bayesian}, which we refer to as {\bf Bayesian Optimization} for simplicity, is a method for MLE with intractable likelihood based on Bayesian optimization \citep{brochu2010tutorial}. 
This method optimizes the parameters in a intractable model so as to minimize the discrepancy between the simulated and test observations. 
Note that comparison with this method in terms of computation time may not make sense (although we report them for purposes of completeness), as we used publicly available code\footnote{https://sheffieldml.github.io/GPyOpt/} for implementation.
{\bf The method of simulated moments (MSM)} \citep{mcfadden1989method} optimizes the parameter in the model so that the resulting moments of simulated data match those of observe data.  
MSM may be seen a special case of indirect inference \citep{gourieroux1993indirect}, an approach studied in econometrics.\footnote{MSM is a special case of indirect inference because the moments can be regarded as the parameters of an auxiliary model.}
{\bf Data-cloning ABC (ABC-DC)} \citep{picchini2017approximate} is an approach combining ABC-MCMC \citep{marjoram2003markov} and Data Cloning \citep{LelNadSch10}, replicating observed data multiple times to achieve MLE with intractable likelihood.

\vspace{-0.5cm}
\paragraph{Experimental settings.}
Unless otherwise specified, the following settings were applied in the experiments.
For all the methods that employed kernels, we used Gaussian kernels.
The discrepancy between the simulated and observed data was measured by the {\em energy distance} \citep{szekely2013energy}, which is a standard metric for distributions in statistics and can be computed only from pairwise Euclidean distances between data points.
Since the usual quadratic time estimator was too costly, we used a linear time estimator for computing the energy distance (see the Supplementary Materials for details).

For each method, unless otherwise specified, we determined the hyper-parameters on the basis of the cross-validation-like approach described in \citet[Sec.~4]{pmlr-v51-park16}.
That is, to evaluate one configuration of hyper-parameters, we first used 75\% of the observed data for point estimation and then computed the discrepancy between the rest of the observed data and the ones simulated from point estimates; after applying this procedure to all candidate configurations, the one with the lowest discrepancy was finally selected.
The bandwidth of a Gaussian kernel was selected from candidate values, each of which is the median (of pairwise distances) multiplied by logarithmically equally spaced values between $2^{-4}$ and $2^4$ \citep[Sec.~5.1.1]{takeuchi2006nonparametric}.
Regularization constants for the proposed method and kernel ABC, as well as the soft threshold for K2-ABC, were selected from logarithmically spaced values between $10^{-4}$ and $1$. 
To compute MMD for K2-ABC, a linear time estimator \citep[Sec.~6]{gretton2012kernel} was used to reduce computational time, as the usual quadratic time estimator was too costly.
For Adaptive SMC-ABC, the initial tolerance level was set as the median of pairwise distances between the observed and simulated data. 
For Bayesian Optimization, we used Expected Improvement as an acquisition function, and all the hyper-parameters were marginalized out following the approach of \citet[Sec.~3.2]{snoek2012practical}.
For MSM, the number of moments were selected from a range up to 30 by the cross-validation like approach. 
For ABC-DC, we employed, in particular, dynamic ABC-DC, which automatically adjusts its associated parameters.
To obtain point estimates with kernel ABC and K2-ABC, we computed the means of the resulting posterior distributions. 
For Adaptive SMC-ABC, point estimates were obtained as posterior means as well as MAP estimates by applying the mean shift algorithm to posterior weighted samples \citep{fukunaga1975estimation}, the latter essentially being an approach suggested by \citet{rubio2013simple}.

The following abbreviations may be used for the sake of simplicity; kernel recursive ABC is referred to as {\bf KR-ABC}, kernel ABC as {\bf K-ABC}, adaptive SMC-ABC as {\bf SMC-ABC}, Bayesian Optimization as {\bf BO}, and Dynamic ABC-DC as {\bf ABC-DC}.
For our method, we also report results based on a half number of iterations, which we call {\bf KR-ABC (less)}.

\subsection{Multivariate Gaussian distribution with a severely misspecified prior} \label{sec:Gauss-misspecified}
As a proof of concept regarding the auto-correction mechanism of the proposed method described in Sec.\ref{sec:proposed-algorithm}, we have performed an experiment for when the prior distribution is severely misspecified (see the Supplementary Materials for an illustration). 
The task is to estimate the mean vector of a 20-dimensional Gaussian distribution ${\rm Normal}(\mu,\Sigma)$, where the true mean vector is $\mu := (10,50,90,130,180,280,390,430,
520,630,1010,1050,\\ 1090,1130,1180,1280,1390,1430,1520,1630)^T \in \R^{20}$. 
The covariance matrix $\Sigma \in \R^{20 \times 20}$ is assumed to be known and is a diagonal matrix with all diagonal elements being $40$.
Test data $y^*$ consisted of 100 i.i.d.~observations from this Gaussian distribution. 
As a prior for the mean vector $\mu$, we used the uniform distribution on $[9\times 10^6, 10^7]^{20}$, which is extremely misspecified.
For Bayesian optimization, the space to be explored was set as $[0, 10^7]^{20}$.  

In this experiment, each pseudo data was made from 100 observations simulated with one parameter configuration.
K2-ABC and K-ABC used 3000 pairs of a parameter and pseudo-data.
For the proposed method and SMC-ABC, we generated 100 pairs of a parameter and pseudo-data for the initial iteration, and then the iterations were repeated 30 times, resulting in a total of 3000 simulations. 
For the proposed method, the bandwidth of the kernel $k_{\Y}$ on observed data was recomputed for each iteration, using the median heuristic. 
For SMC-ABC, the parameter $\alpha \in (0,1)$, which controls the trade-off between the speed of convergence and the accuracy of posterior approximation, was set to be 0.3, as we found this value to be the best in terms of the trade-off.

For each method, we ran 30 independent trials, and the results in averages and standard deviations are shown in Table \ref{tab:table_mvincorrect}, where the {\em parameter error} is the mean (over 20 dimensions) of the absolute difference between the estimated and the true parameter values divided by the true value, and the {\em data error} is the energy distance between the true data and pseudo data simulated with the estimated parameter. 
Surprisingly, the proposed method successfully approached the true parameter even when the prior was severely misspecified.
As discussed in Sec~\ref{sec:proposed-algorithm} and demonstrated in the Supplementary Materials, this would appear to be because of the use of kernel herding, which automatically widens the space to explore when simulated data is far apart from test data.
As expected, other methods were unable to approach the true parameter.

\begin{table}[t]
  \caption{Results for multivariate Gaussian distributions in Sec.~\ref{sec:Gauss-misspecified}}
    \begin{center}
   \begin{adjustbox}{max width=230pt,
   max height = 100pt}
  \begin{tabular}{|l|c|r|r|r|} \hline
    Algorithm & parameter error & data error& cputime \\ \hline 
    KR-ABC & 0.70(0.29) & 0.008(0.004) & 866.02(26.12) \\ \hline
    KR-ABC (less) & 7.22(3.28) & 0.02(0.24) & 353.498(23.05) \\ \hline
    K2-ABC & >1e+6 (>1e+3) & >1e+5 (>1e+3) & 209.51(11.49) \\ \hline
    K-ABC & >1e+6 (>1e+3) &>1e+5 (>1e+3)) & 403.93(24.97) \\ \hline
    SMC-ABC (mean) & >1e+6 (>1e+3) & >1e+5 (>1e+3) & 590.41(29.54)\\ \hline
	SMC-ABC (MAP) & >1e+6 (>1e+3) & >1e+5 (>1e+3) & 590.41(29.54)\\ \hline
    ABC-DC & >1e+6 (>1e+3) & >1e+5 (>1e+3) & 313.99(16.85)\\ \hline
    BO & >1e+5(>1e+4) & >1e+5 (>1e+4)&  25940.86(936.40)\\ \hline
    MSM & >1e+5(>1e+4) & >1e+5(>1e+4)& 307.42(67.94)\\ \hline
  \end{tabular}
  \end{adjustbox}
  \label{tab:table_mvincorrect}
  \end{center}
    \vspace{-0.15in}
\end{table}

\subsection{Ecological dynamic systems: blowfly}
\label{sec:blowfly}
Following \citet{pmlr-v51-park16}, we performed an experiment on parameter estimation with a dynamical system of blowfly populations \citep{Woo10}, which is defined as
\begin{equation*}
N_{t+1} = PN_{t-\tau} \exp\left(- N_{t-\tau} / N_{0} \right)\it{e_{t}} + \it{N_{t}} \exp(-\delta \epsilon_{t}),
\end{equation*}
where $t = 1,\dots,T$ are time indices, $N_t$ is the population at time $t$, $e_{t} \sim {\rm Gam}(\frac{1}{\sigma_{p}^{2}} \sigma_{p}^{2})$ and $\epsilon_{t} \sim {\rm Gam}(\frac{1}{\sigma_{d}^{2}}, \sigma_{d}^{2})$ are independent Gamma-distributed noise, and $\theta := (P \in \mathbb{N},\ N_{0} \in \mathbb{N},\ \sigma_{d} \in \mathbb{R}_{+},\ \sigma_{p} \in \mathbb{R}_{+},\ \tau \in \mathbb{N},\ \delta\ \in \mathbb{R}_{+})$ are the parameters of the system.
The task is to estimate $\theta$ from observed values of $N_1,\dots,N_T$.
We set the true parameters as $\theta = (29, 260, 0.6, 0.3, 7, 0.2)$, and the time-length $T$ for both the observed and pseudo data as $T = 1000$. Following \citet[Sec.~4]{pmlr-v51-park16}, for each parameter we defined a Gaussian prior on its logarithm (see the Supplementary Materials for a definition). In this experiment, for all methods we converted the observed and pseudo-data into histograms with 1000 bins (i.e., we treated each data as a 1000 dim.~vector), as this produced better results. 
K2-ABC and K-ABC used 1300 pairs of a parameter and a pseudo-data item.
For the proposed method and SMC-ABC, we generated 100 pairs of a parameter and pseudo-data for the initial iteration, and the iterations were then repeated 13 times, resulting in total 1300 simulations. 
For SMC-ABC, we set the parameter $\alpha \in (0,1)$ to be 0.3, as in Sec.~\ref{sec:Gauss-misspecified}.

For each method we performed 30 independent trials, and the results are summarized in Table \ref{tab:table1}. 
The proposed method performed the best, even when the number of simulations was halved (i.e., KR-ABC (less)). 
%

\begin{table}[t]
  \caption{Results for blowfly population dynamics in Sec.\ref{sec:blowfly}}
    \begin{center}
    \begin{adjustbox}{max width=210pt}
  \begin{tabular}{|l|c|r|r|r|} \hline
    Algorithm & parameter error & data error & cputime\\ \hline 
    KR-ABC & 0.47(0.11) & 43.85(37.24) &  101.143(13.25) \\ \hline
    KR-ABC (less) & 0.57(0.21) & 67.57(47.11) &  32.98(1.21) \\ \hline
    K2-ABC & 0.81(0.42) & 67.45(77.86) &  23.47(1.59) \\ \hline
    K-ABC & 0.62(0.09) & 89.37(29.22) &  30.66(2.57)  \\ \hline
    SMC-ABC (mean) & 0.83 (0.10) & 170.41 (47.91) &  38.50(2.34) \\ \hline
	SMC-ABC (MAP) & 0.84 (0.12) & 163.19(42.51) &  38.50(2.34) \\ \hline
    ABC-DC & 0.89(0.17) & 134.12(58.92) &  29.94(4.57)\\ \hline
    BO &0.70(0.28) & 108.18(67.08) & 3217.40(157.31)\\ \hline
    MSM & 0.67(0.08) & 89.17(33.20) &  25.46(8.26)\\ \hline
  \end{tabular}
  \end{adjustbox}
  \label{tab:table1}
  \end{center}
  \vspace{-0.15in}
\end{table}

\subsection{Multivariate elliptically contoured alpha stable distribution}
\label{sec:alpha-stable}
In addition to the competitive methods described earlier, we performed a comparison with the method called {\em noisy ABC-MLE} \citep{yildirim2015parameter}.
This method assumes that sampling from the intractable model can be realized by deterministic mapping applied to a simple random variable, and that the gradient of the deterministic mapping is available. 
This method can, then, only be applied to a limited class of generative models, though for such models it can perform well.
Although the main scope of this paper is on simulation models in which such gradient information is unavailable, we performed this experiment in order to see how the proposed method compared with this method without relying on the gradient information.


We also considered parameter estimation with {\em multivariate elliptically contoured alpha stable distributions} \citep{nolan2013multivariate}, which subsume heavy-tailed and skewed distributions and are popular for modeling financial data.
This family of distributions in general does not admit closed-form expressions for density functions, which means they are ``intractable'' in the sense that the standard procedure for parameter estimation cannot be employed.
However, sampling of a random vector ${\bm X} \in \R^d$ from this family is possible in the following way:
\begin{eqnarray*}
{\bm X}& :=& A^{1/2} {\bm G} + \delta \in \R^d,\quad 
{\bm G} \sim  {\rm Normal}({\bm 0},Q), \\
A& :=& \tau_{\theta}(U_1,U_2) \in \R,\\
&& U_1  \sim   {\rm Unif}(-\pi/2, \pi/2),\quad U_{2} \sim {\rm Exp}(1),
\end{eqnarray*}
where  $Q \in \mathbb{R}^{d \times d}$ is positive definite,  $\delta \in \mathbb{R}^{d}$, $\theta := (\alpha, \beta, \mu, \sigma) \in (0,2] \times [-1,1] \times \R \times [0,\infty)$,  $\tau_{\theta}$ is a deterministic mapping whose concrete form is described in the Supplementary Materials, and ${\rm Unif}$ and ${\rm Exp}$ denote uniform and exponential distributions, respectively.

We dealt with estimation of $\alpha := 1.3$ and $Q$, while fixing the other parameters as $\delta := {\bm 0}$, $\beta := 1$, $\mu := 0$, and $\sigma := 1$.
We restricted $Q$ to be a positive definite matrix such that all diagonal elements are the same and so are the off-diagonal elements.
We defined true $Q$ to be a matrix whose diagonal elements are $1.0$ and off-diagonals are $0.2$.
Therefore the task was to estimate these three values (i.e., $1.3$ for $\alpha$, and $1.0$ and $0.2$ for $Q$).
We used ${\rm Unif}[0,2]$ as a prior for $\alpha$, and ${\rm Unif}[0,5]$ as a prior for each of the diagonal and off-diagonal values of $Q$.
 

Figure \ref{fig:alpha} shows results for the averages of mean square errors in parameter estimation over 30 independent trials, with variation in the dimensionality $d$ from $2$ to $16$.
For each method we sampled a total of 1400 pairs of a parameter and a pseudo-data item, and for iterative methods we used 100 pairs in each iteration.
Each pseudo-data (and observed-data) was made up of 1000 points.
The noisy ABC-MLE exploited the gradient information in $\tau_\theta$, while the other methods did not.
The proposed method was competitive with BO and outperformed the other methods with the exception of the noisy ABC-MLE.
Although the noisy ABC-MLE was accurate for lower-dimensionality (as expected), it exhibited a steep increase in errors for higher dimensionality. In contrast, the performance degradation of the proposed method was mild for higher dimensionality.


%
\begin{figure}[t]
\begin{minipage}{0.48\hsize}
\begin{center}
\includegraphics[scale = 0.12]
{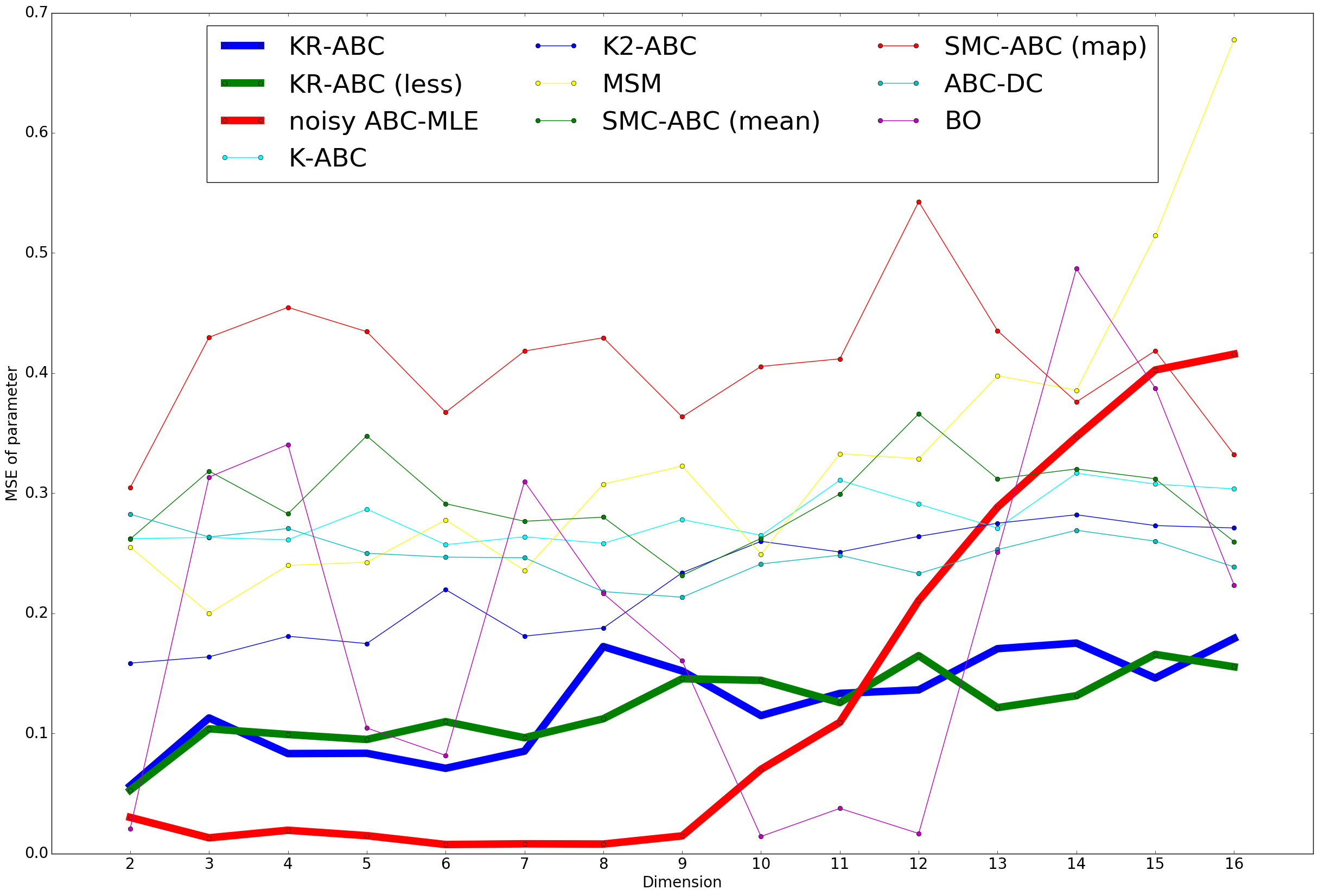}
\end{center}
\end{minipage}
\caption{Results for multivariate elliptically contoured alpha stable distributions in Sec.~\ref{sec:alpha-stable}. 
For each dimension (vertical axis), the averages of mean squared errors (MSE) over 30 trials are shown.}
\vspace{-0.15in}
\label{fig:alpha}
\end{figure}

\subsection{Gaussian mixture with redundant components}
\label{sec:gaussian-mixture}
We consider here a parametric model in which there exist redundant parameters for expressing given data.
We are interested in whether point estimation with the proposed method results in elimination of the redundant parameters when applied to such a model.
This was motivated by \citet{Yamazaki2013}, who argued that, for mixture models, the use of a Dirichlet prior with a sufficiently small concentration parameter leads to elimination of unnecessary components.
We therefore focus on mixture models with redundant components.

Specifically, we considered Gaussian mixture models.
We defined the true model as a {\em two-component} Gaussian mixture
$\sum_{i=1}^{2} \phi_i \rm{Normal}(\mu_i, 20)$ of equal variances. 
The task was to estimate the mixture coefficients $(\phi_1,\phi_2,) := (0.7, 0.3)$ and the associate means $(\mu_1,\mu_2) := (110, 70)$, provided 3000 i.i.d.~sample points from the model as observed data $y^*$.
We employed an over-parametrized model for point estimation (i.e., no method used the knowledge that the truth consisted of 2 components), which is a {\em four-component} Gaussian mixture $\sum_{i=1}^{4} \phi_i \rm{Normal}(\mu_i, 20)$. 
We used a 4-dimensional Dirichlet distribution with equal concentration parameters $0.01$ as a prior for the coefficients $(\phi_1,\dots,\phi_4)$, and ${\rm Normal} (0,100)$ as a prior for each of $\mu_1,\dots,\mu_4$.

For each method, we generated a total of 1000 pairs of a parameter and pseudo-data, and for iterative methods, we made use of 100 pairs in each iteration, resulting in 10 iterations.
Each pseudo-data consisted of 3000 simulated observations.
For all the methods, we converted each data item into a histogram of 300 bins and treated it as a 300 dim.~vector since this resulted in better performances. 
We set the parameter $\alpha \in (0,1)$ of SMC-ABC to be 0.2, as this performed well in this experiment.


We ran each algorithm 30 times, and the resulting average errors and standard deviations are shown in Table \ref{tab:table2}, where the $\phi$ error and $\mu$ error denote the errors for the coefficients and the means, respectively, as measured in terms of Euclidean distance. 
More precisely, since any permutation of component labels will result in the same model, we first sorted the estimated parameters $\{(\phi_i,\mu_i)\}$ so that $\phi_1 \geq \cdots \geq \phi_4$, and we then measured the errors w.r.t.~the ground truth $\phi := (0.7,0.3,0,0)$ and $\mu := (110,70)$.
For the $\mu$ error, we computed the errors only for the estimated means $\mu_1, \mu_2$ associated with the two largest coefficients since there was no ground truth for the redundant components $\mu_3, \mu_4$.
Results show that the proposed KR-ABC performed best, indicating that the dominant components were successfully estimated.

\begin{table}[t]
  \caption{Results for the Gaussian mixture model in Sec.~\ref{sec:gaussian-mixture}} \label{tab:table2}
   \begin{center}
   \begin{adjustbox}{max width=210pt,max height=40pt}
   \begin{tabular}{|l|c|r|r|r|r|} \hline
    Algorithm & $\phi$ error & $\mu$ error & data error & cputime \\ \hline
    KR-ABC & 0.159(0.106) & 54.14(8.71) & 0.03(0.05) & 281.59(12.55)\\ \hline
    KR-ABC (less) &0.22(0.13) & 64.04(17.58) & 0.11(0.15) & 147.39(16.76) \\ \hline
    K2-ABC & 0.53(0.02) & 93.98(3.84) & 0.69(0.10)& 89.43(7.56)\\ \hline
    K-ABC & 0.50(0.02) & 92.57(12.77) & 0.67(0.17)& 135.01(11.35)\\ \hline
    SMC-ABC (mean) & 0.51(0.04) & 83.19(27.86) & 0.23(0.15)& 214.35(8.77) \\ \hline 
    SMC-ABC (MAP) & 0.21(0.13) & 72.76(56.24) & 0.12(0.08)& 214.35(8.77) \\ \hline 
    ABC-DC & 0.48(0.16) & 137.84(50.06) & 0.43(0.14)& 149.74(21.22)\\ \hline
    BO & 0.37(0.08) & 80.79(36.17) & 0.82(0.68)& 13775.48(1438.9)\\ \hline
    MSM & 0.24(0.08) & 117.02(11.48) & 0.24(0.08)& 171.58(59.38)\\ \hline
  \end{tabular}
  \end{adjustbox}
  \end{center}
\vspace{-0.25in}
\end{table}

\subsection{Real-world pedestrian simulator}
\label{sec:pedestrian-sim}
Our final experiment was parameter estimation with {\em CrowdWalk}, a publicly available real-world simulator\footnote{https://github.com/crest-cassia/CrowdWalk} for the movements of pedestrians in a commercial district \citep{Noda2010p}.
It has been used to gain insights into pedestrian behavior at a variety of events and occurrences, such as fireworks festivals and evacuations after earthquakes. 
As this simulator is complicated and also computationally expensive, its likelihood function is intractable.


Using CrowdWalk, we simulated the movements of pedestrians in Ginza, a commercial district in Tokyo (see Supplementary Materials for an illustration).
Specifically, we modeled pedestrians as a mixture of multiple groups,
each of which has the following 6 parameters (below $i$ denotes the index of a group):
(1) $\theta^{(N)}_i \in \mathbb{N}$: the number of pedestrians in the group; 
(2) $\theta^{(T)}_i \in \R_{+}$: the time when the group starts to move; 
(3) $\theta^{(S)}_i \in \R^2$: the starting location of the group (e.g., stations); 
(4) $\theta^{(G)}_i \in \R^2$: the goal location of the group; 
(5) $\theta^{(P)}_i \in \R^2$: the intermediate location(s) that the pedestrians in the group visit (e.g., stores); and 
(6) $\theta^{(R)}_i \in \R_{+}$: the time duration(s) of the pedestrians' visit(s) at the intermediate location(s).

In this experiment, we focused on estimation of the first two parameters $\theta^{(N)}_i$, $\theta^{(T)}_i$, and fixed the other parameters.
We defined the true model as a mixture of 5 pedestrian groups, and set their parameters as $(\theta^{*(N)}_1,\dots,\theta^{*(N)}_5) := (100,100,100,100,100)$ and $(\theta^{*(T)}_1,\dots,\theta^{*(T)}_5) := (30, 60, 90, 120, 150)$. 
As in Sec.~\ref{sec:gaussian-mixture}, we used a redundant model of a mixture of 10 groups for parameter estimation.
The goal was to detect the active 5 groups of the true model, without knowing that the truth consists of 5 groups.
For simplicity, 5 (unknown) groups among the 10 candidate groups included the parameters of the true model other than $\theta^{*(N)}_i$, $\theta^{*(T)}_i$; see the Supplementary Materials for details.

We defined prior distributions as follows.
First we assumed the total number $500$ of pedestrians to be known.
The mixing coefficients of the mixture of 10 groups are given by $(\phi_1,\dots,\phi_{10}) = (\theta^{(N)}_1,\dots,\theta^{(N)}_{10}) / 500$.
Thus, rather than directly putting a prior on $(\theta^{(N)}_1,\dots,\theta^{(N)}_{10})$, we defined a prior on the mixing coefficients $(\phi_1,\dots,\phi_{10})$.
Specifically, we used a Dirichlet prior with a small concentration parameter, as in Sec.~\ref{sec:gaussian-mixture}, in order to eliminate 5 redundant components:
\begin{eqnarray*}
(\phi_1,\dots,\phi_{10}) &\sim& {\rm Dirichlet}(\alpha_{1} ,..., \alpha_{10} ), \\
\theta^{(N)}_{i} &:=&  \phi_i * 500, \quad (i = 1,\dots, 10)
\end{eqnarray*}
where $\alpha_1 = \cdots = \alpha_{10} = 0.01$ denote the concentration parameters.
For each of $\theta^{(T)}_1,\dots,\theta^{(T)}_{10}$, we defined a broad uniform prior 
$\theta^{(T)}_i \sim {\rm Unif}(0,480)$.

From the true model, we simulated 4200 time steps of pedestrian flow as observed data.
We made $5 \times 5 = 25$ grids in a map of Ginza and computed a histogram of the corresponding 25 bins for each time step.
Thus, observed data was made up of 4200 vectors in $\R^{25}$.
In the same way, each method generated a total of 4200 vectors, and each iterative method made use of 200 vectors in each iteration, running 21 iterations in total.
For SMC-ABC, we set the parameter $\alpha \in (0,1)$ to be 0.2, as in the previous experiment. 


We ran each method 20 times, and the resulting averages and standard deviations for errors are summarized in Table \ref{tab:table3}, where ``$\theta^{(N)}$ error'' and ``$\theta^{(T)}$ error'' denote the errors of the corresponding estimated parameters, as measured in terms of Euclidean distance.
These errors were computed in the same way as in Sec.~\ref{sec:gaussian-mixture} (e.g., the estimated parameters were sorted according to the magnitudes of the mixing coefficients).
Results show that our method performed the best, confirming its effectiveness.
In the Supplementary Materials, we also report the point estimates made using the proposed method, showing that the true parameters were estimated reasonably accurately.
\begin{table}[t]
  \caption{Results for the pedestrian simulator in Sec.~\ref{sec:pedestrian-sim}} \label{tab:table3}
  \begin{center}
  \begin{adjustbox}{max width=220pt}
  \begin{tabular}{|l|c|r|r|r|r|} \hline
    Algorithm & $\theta^{(N)}$ error & $\theta^{(T)}$ error & data error& cputime \\ \hline
    KR-ABC & 61.58(74.42) & 70.93(102.08) & 0.008(0.009) & 2233.45(97.54)\\ \hline
    KR-ABC (less) & 82.46(75.05) & 134.00(161.85) & 0.014(0.014) & 1875.32(147.16)\\ \hline
    K2-ABC & 298.94(120.71) & 308.95(109.43) & 0.10(0.10)& 1547.32(56.31)\\ \hline
    K-ABC & 354.72(145.76) & 389.52(140.91) & 0.12(0.09)& 1773.74(84.91)\\ \hline
    SMC-ABC (mean) & 271.51(104.64) & 363.12(91.28) & 0.09(0.07)& 2017.89(110.02)\\ \hline
    SMC-ABC (MAP) & 255.15(139.33) & 348.43(104.74) & 0.09(0.1)& 2017.89(110.02)\\ \hline
    ABC-DC & 273.93(136.14)& 327.48(98.12) & 0.09(0.14) & 1984.43(59.12)\\ \hline
    BO & 194.57(65.83) & 291.73(105.33) & 0.04(0.06) & 37541.23(3047.46)\\ \hline
    MSM & 453.58(89.43) & 510.04(55.10) & 0.24(0.17) & 1869.83(49.51)\\ \hline

  \end{tabular}
  \end{adjustbox}
  \end{center}
  \vspace{-0.25in}
\end{table}

\section{Summary and future work} \label{sec:conclusions}
We have proposed kernel recursive ABC for point estimation with intractable likelihood and have empirically investigated the effectiveness of this approach. While we have also provided theoretical analysis to a certain extent, there remain important theoretical topics, as discussed in Sec.~\ref{sec:theory}, that we wish to reserve for future research.



\subsection*{Acknowledgements}
We thank the anonymous reviewers as well as Itaru Nishioka, Itsuki Noda, Takashi Washio, Shinji Ito, Wittawat Jitkrittum, and Marie Oshima for their helpful comments and support. We also thank Shuhei Mano for providing his code.
MK has been supported by the European Research Council (StG Project PANAMA).
KF has been supported by JSPS KAKENHI 26280009.

\bibliographystyle{natbib}
\bibliography{krabc}

\newpage
\appendix
\onecolumn

\
\begin{center}
   \section*{\centering Supplementary Materials}
\end{center}
\vspace{0.3in}


\setcounter{assumption}{0}
\setcounter{proposition}{0}

\section{Proofs for theoretical results}
We here provide proofs for the theoretical results in Section 3 of the main text.
For ease of understanding, we repeat the assumptions and the statements w.r.t. those results.
The notation follows that of the main text.

\begin{assumption} \label{as:likelihood_prior}
(i) $\ell$ has a unique global maximum at $\theta_\infty \in \Theta$, and $\pi(\theta_\infty) > 0$;
(ii) $\pi$ is continuous at $\theta_\infty$, $\ell$ has continuous second derivatives in the neighborhood of $\theta_\infty$, and the Hessian of $\ell$ at $\theta_\infty$ is strictly negative-definite.
\end{assumption}

\begin{proposition} \label{prop:convergence}
Let $\Theta \subset \R^d$ be a Borel measurable set, $k: \Theta \times \Theta \to \R$ be a continuous, bounded kernel, and $\H$ be its RKHS.
If Assumption \ref{as:likelihood_prior} holds, then we have 
\begin{eqnarray*}
\lim_{N \to \infty} \left\| \mu_{P_N} - k(\cdot,\theta_\infty) \right\|_{\mathcal{H}} = 0.
\end{eqnarray*}
\end{proposition}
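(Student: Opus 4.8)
The plan is to reduce the RKHS-norm convergence to weak convergence of the powered posteriors $P_N$ to the Dirac measure $\delta_{\theta_\infty}$, and then exploit boundedness and continuity of the kernel to pass to the limit inside the relevant integrals. First I would establish that $P_N \Rightarrow \delta_{\theta_\infty}$ as $N\to\infty$; this is essentially a Laplace-type / data-cloning argument, and Assumption~\ref{as:likelihood_prior} is precisely what is needed: the unique global maximum of $\ell$, positivity and continuity of $\pi$ at $\theta_\infty$, and the strictly negative-definite Hessian together guarantee that $p_N(\theta)\propto \pi(\theta)\ell(\theta)^N$ concentrates more and more sharply around $\theta_\infty$. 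Rather than reprove this, I would cite the corresponding result of \citet{LelNadSch10} (their Lemma~A.2 and its corollary), after checking that Assumption~\ref{as:likelihood_prior} matches their hypotheses A1--A3.

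Next, I would expand the squared RKHS norm using the reproducing property:
\begin{align*}
\left\| \mu_{P_N} - k(\cdot,\theta_\infty) \right\|_{\mathcal{H}}^2
&= \langle \mu_{P_N}, \mu_{P_N}\rangle_\mathcal{H} - 2\langle \mu_{P_N}, k(\cdot,\theta_\infty)\rangle_\mathcal{H} + \langle k(\cdot,\theta_\infty), k(\cdot,\theta_\infty)\rangle_\mathcal{H} \\
&= \int\!\!\int k(\theta,\theta')\, dP_N(\theta)\, dP_N(\theta') - 2\int k(\theta,\theta_\infty)\, dP_N(\theta) + k(\theta_\infty,\theta_\infty).
\end{align*}
Here the first equality uses $\langle \mu_P, f\rangle_\mathcal{H} = \int f\, dP$ for $f\in\mathcal{H}$ and the Bochner-integral definition of $\mu_{P_N}$ (legitimate because $k$ is bounded, so $\mu_{P_N}\in\mathcal{H}$). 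Then I would take $N\to\infty$ in each term. Since $k$ is continuous and bounded, weak convergence $P_N\Rightarrow\delta_{\theta_\infty}$ gives $\int k(\theta,\theta_\infty)\,dP_N(\theta)\to k(\theta_\infty,\theta_\infty)$ directly, and for the double integral I would invoke that $P_N\otimes P_N \Rightarrow \delta_{\theta_\infty}\otimes\delta_{\theta_\infty}$ (weak convergence is preserved under products, e.g.\ Theorem~2.8 in \citet{Bil99}) together with continuity and boundedness of $(\theta,\theta')\mapsto k(\theta,\theta')$ to conclude $\int\!\!\int k\, dP_N\, dP_N \to k(\theta_\infty,\theta_\infty)$. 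Collecting the three limits yields $k(\theta_\infty,\theta_\infty) - 2k(\theta_\infty,\theta_\infty) + k(\theta_\infty,\theta_\infty) = 0$, and taking square roots finishes the proof.

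The main obstacle is the first step — verifying the weak convergence $P_N \Rightarrow \delta_{\theta_\infty}$. If one wanted a self-contained argument, one would need a Laplace-approximation / Watson's-lemma style estimate showing that the mass of $p_N$ outside any neighborhood $U$ of $\theta_\infty$ decays relative to the mass inside $U$; the negative-definite Hessian controls the local quadratic behaviour and the uniqueness of the global maximum controls the tails. Since \citet{LelNadSch10} already carry this out under assumptions equivalent to Assumption~\ref{as:likelihood_prior}, the cleanest route is to cite them; the only care needed is to confirm that ``$\ell$ has a unique global maximum'' combined with the regularity conditions indeed rules out escape of mass to the boundary or to infinity in their setting. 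The remaining steps — the RKHS-norm expansion and the continuous-mapping/bounded-convergence arguments — are routine given that $k$ is continuous and bounded.
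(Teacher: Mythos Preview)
Your proposal is correct and is essentially the same argument as the paper's own proof: both cite the corollary to Lemma~A.2 in \citet{LelNadSch10} for the weak convergence $P_N\Rightarrow\delta_{\theta_\infty}$, expand the squared RKHS norm into the three kernel integrals, and then pass to the limit using continuity and boundedness of $k$ together with Theorem~2.8 of \citet{Bil99} for the double-integral term.
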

\begin{proof}
Because Assumption \ref{as:likelihood_prior} is equivalent to Assumptions A1, A2 and A3 in \citet{LelNadSch10}, we can use the Corollary to Lemma A.2 on p.1624 of \citet{LelNadSch10}; this guarantees the weak convergence of $P_N$ to $\delta_{\theta_\infty}$, the Dirac distribution at $\theta_\infty$.
Therefore, 
\begin{eqnarray}
 \lim_{N \to \infty} \left\| \mu_{P_N} - k(\cdot,\theta_\infty) \right\|_{\mathcal{H}}^2 \nonumber 
&=& \lim_{N \to \infty} \left< \mu_{P_N}, \mu_{P_N} \right>_{\mathcal{H}} - 2 \lim_{N \to \infty} \left< \mu_{P_N}, k(\cdot,\theta_\infty) \right>_{\mathcal{H}} \nonumber \\ && +  \left<k(\cdot,\theta_\infty), k(\cdot,\theta_\infty) \right>_{\mathcal{H}} \nonumber \\
&=& \lim_{N \to \infty} \int \int k(\theta,\theta') dP_N(\theta) dP_N(\theta') \nonumber \\
&& - 2 \lim_{N \to \infty} \int k(\theta,\theta_\infty)dP_N(\theta) + k(\theta_\infty,\theta_\infty) \nonumber \\
&=& k(\theta_\infty,\theta_\infty) - 2 k(\theta_\infty,\theta_\infty) + k(\theta_\infty,\theta_\infty) \label{eq:limit} \\
&=& 0, \nonumber
\end{eqnarray}
where (\ref{eq:limit}) follows from the weak convergence of $P_N$ to $\delta_{\theta_\infty}$ and $k$ is continuous and bounded. Here we have used Theorem 2.8 (ii) in \cite{Bil99} for the first term in (\ref{eq:limit}).
\end{proof}

\begin{assumption} \label{as:kernel}
(i) There exists a constant $C > 0$ such that $k(\theta,\theta) = C$ for all $\theta \in \Theta$.
(ii)  It holds that $k(\theta,\theta') < C$ for all $\theta, \theta' \in \Theta$ with $\theta \neq \theta'$.
\end{assumption}

\begin{proposition}
\label{pr:consistency2}
Let $\Theta \subset \R^d$ be a compact set and $k: \Theta \times \Theta \to \R$ be a continuous, bounded kernel. 
Let $\theta_N := \argmin_{\tilde{\theta} \in \Theta} \left\lVert \mu_{P_N} - k(\cdot,\tilde{\theta}) \right\rVert_{\mathcal{H}}$.
If Assumptions \ref{as:likelihood_prior} and \ref{as:kernel} hold, then we have $\theta_N \to \theta_{\infty}$ as $N \to \infty$.
\end{proposition}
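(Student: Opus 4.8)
The plan is to reduce the statement to an optimization-localization argument layered on top of Proposition~\ref{prop:convergence}. First I would record that $\theta_N$ is well-defined: since $k$ is continuous, $\tilde\theta \mapsto \|\mu_{P_N} - k(\cdot,\tilde\theta)\|_{\mathcal{H}}^2$ is continuous on the compact set $\Theta$, so a minimizer exists. Expanding this squared norm via the reproducing property and using Assumption~\ref{as:kernel}(i) to replace $k(\tilde\theta,\tilde\theta)$ by $C$ gives $\|\mu_{P_N} - k(\cdot,\tilde\theta)\|_{\mathcal{H}}^2 = \|\mu_{P_N}\|_{\mathcal{H}}^2 - 2\mu_{P_N}(\tilde\theta) + C$, so minimizing the RKHS distance is equivalent to maximizing the evaluation $\mu_{P_N}(\tilde\theta)$; hence $\theta_N = \argmax_{\tilde\theta \in \Theta} \mu_{P_N}(\tilde\theta)$.

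Next I would upgrade the RKHS-norm convergence from Proposition~\ref{prop:convergence} to uniform convergence of the functions $\mu_{P_N}$ towards $k(\cdot,\theta_\infty)$. By the reproducing property and the Cauchy--Schwarz inequality, for every $\theta \in \Theta$,
\[
\bigl|\mu_{P_N}(\theta) - k(\theta,\theta_\infty)\bigr| \le \sqrt{k(\theta,\theta)}\,\bigl\|\mu_{P_N} - k(\cdot,\theta_\infty)\bigr\|_{\mathcal{H}} = \sqrt{C}\,\bigl\|\mu_{P_N} - k(\cdot,\theta_\infty)\bigr\|_{\mathcal{H}},
\]
and the right-hand side is independent of $\theta$ and tends to $0$ as $N \to \infty$. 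This is exactly where boundedness of $k$ (equivalently, Assumption~\ref{as:kernel}(i)) is used.

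Then I would localize the maximizer. Fix $\varepsilon > 0$ and let $U_\varepsilon(\theta_\infty)$ be the open $\varepsilon$-ball around $\theta_\infty$. The set $\Theta \setminus U_\varepsilon(\theta_\infty)$ is compact, and $\theta \mapsto k(\theta,\theta_\infty)$ is continuous and, by Assumption~\ref{as:kernel}(ii), strictly less than $C$ on it, so there is $\delta > 0$ with $\max_{\theta \in \Theta \setminus U_\varepsilon(\theta_\infty)} k(\theta,\theta_\infty) \le C - \delta$. Choosing $N_0$ so that the uniform bound above is at most $\delta/3$ for all $N \ge N_0$, we get $\mu_{P_N}(\theta_\infty) \ge C - \delta/3$, while for $\theta \notin U_\varepsilon(\theta_\infty)$ we have $\mu_{P_N}(\theta) \le (C - \delta) + \delta/3 = C - 2\delta/3 < \mu_{P_N}(\theta_\infty)$. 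Hence the maximizer $\theta_N$ lies in $U_\varepsilon(\theta_\infty)$ for all $N \ge N_0$, and since $\varepsilon$ was arbitrary, $\theta_N \to \theta_\infty$.

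I expect the main obstacle to be the middle step together with the correct double use of compactness: Assumption~\ref{as:kernel}(ii) delivers only a strict pointwise inequality $k(\theta,\theta_\infty) < C$, and without compactness of $\Theta \setminus U_\varepsilon(\theta_\infty)$ the supremum there could still be $C$, which would break the gap argument; compactness is also what guarantees existence of $\theta_N$. An alternative to the localization step is a subsequence argument: first show $k(\theta_N,\theta_\infty) \to C$ using the uniform bound and the fact that $\theta_N$ and $\theta_\infty$ maximize $\mu_{P_N}$ and $k(\cdot,\theta_\infty)$ respectively, then, assuming $\theta_N \not\to \theta_\infty$, extract a convergent subsequence $\theta_{N_j} \to \theta_* \neq \theta_\infty$ by compactness and obtain $k(\theta_*,\theta_\infty) = C$ by continuity, contradicting Assumption~\ref{as:kernel}(ii).
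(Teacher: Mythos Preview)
Your proposal is correct and follows essentially the same approach as the paper: the reduction to $\theta_N=\argmax_{\tilde\theta}\mu_{P_N}(\tilde\theta)$ via Assumption~\ref{as:kernel}(i), the uniform bound from Cauchy--Schwarz, and the $\delta/3$ localization against $\max_{\theta\notin U_\varepsilon(\theta_\infty)}k(\theta,\theta_\infty)\le C-\delta$ are exactly what the paper does, down to the choice of constants. Your alternative subsequence argument is also valid and in fact appears (commented out) as an earlier version of the paper's proof.
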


\begin{proof}
By the reproducing property and Assumption \ref{as:kernel}, we have
\begin{eqnarray}
\| \mu_{P_N} - k(\cdot,\tilde{\theta})\|_{\H}^2 
&=& \left\| \mu_{P_N} \right\|_{\mathcal{H}}^2 \nonumber -2 \mu_{P_N}(\tilde{\theta}) + k(\tilde{\theta}, \tilde{\theta}) \\
&=&  \left\| \mu_{P_N} \right\|_{\mathcal{H}}^2 \nonumber -2 \int k(\tilde{\theta},\theta)dP_N(\theta) + C.
\end{eqnarray}
Since $\int k(\tilde{\theta},\theta)dP_N(\theta)$ is a continuous function of $\tilde{\theta}$ (which follows from the continuity of $k$ and the dominated convergence theorem), it follows that $\| \mu_{P_N} - k(\cdot,\tilde{\theta})\|_{\H}^2$ is a continuous function of $\tilde{\theta}$, and so is $\| \mu_{P_N} - k(\cdot,\tilde{\theta})\|_{\H}$.
Thus, since $\Theta$ is compact,  $\theta_N =  \argmin_{\tilde{\theta} \in \Theta} \left\lVert \mu_{P_N} - k(\cdot,\tilde{\theta}) \right\rVert_{\mathcal{H}}$ exists.
Using the above identity, we then have
\begin{eqnarray*}
\theta_N &=& \argmin_{\tilde{\theta} \in \Theta} \left\lVert \mu_{P_N} - k(\cdot,\tilde{\theta})  \right\rVert_{\mathcal{H}}^2 \\
&=& \argmin_{\tilde{\theta} \in \Theta} \left\| \mu_{P_N} \right\|_{\mathcal{H}}^2 \nonumber -2 \mu_{P_N}(\tilde{\theta}) + C \\
&=& \argmax_{\tilde{\theta} \in \Theta} \mu_{P_N}(\tilde{\theta}).
\end{eqnarray*}
By the reproducing property, the Cauchy-Schwartz inequality, and Assumption \ref{as:kernel}, we have for all $\theta \in \Theta$
\begin{eqnarray}
\left| \mu_{P_N}(\theta) -  k(\theta,\theta_\infty) \right| 
&=& \left| \left< k(\cdot,\theta), \mu_{P_N} -  k(\cdot,\theta_\infty) \right> \right| \nonumber \\
&\leq& \sqrt{ k(\theta,\theta) } \left\| \mu_{P_N} -  k(\cdot,\theta_\infty) \right\|_{\mathcal{H}} \nonumber \\
&=& \sqrt{ C } \left\| \mu_{P_N} -  k(\cdot,\theta_\infty) \right\|_{\mathcal{H}} \label{eq:uniform}
\end{eqnarray}

Let $\varepsilon$ be an arbitrary positive number and $U_\varepsilon(\theta_\infty)$ be an open $\varepsilon$-neighborhood of $\theta_\infty$.  From Assumption \ref{as:kernel} (ii) and the continuity of $k$, there is $\delta>0$ such that 
\begin{equation}
\max_{\theta\in\Theta\backslash U_\varepsilon(\theta_\infty)} k(\theta,\theta_\infty) \leq C-\delta. \label{eq:outof}
\end{equation}
It follows from Eq.(\ref{eq:uniform}) and Proposition \ref{prop:convergence} that there is $N_0\in\mathbb{N}$ such that 
\begin{equation}
\max_{\theta\in\Theta} \left| \mu_{P_N}(\theta) -  k(\theta,\theta_\infty) \right| \leq \delta/3 \label{eq:unifbound}
\end{equation}
holds for all $N\geq N_0$.
This implies, in particular, that for all $N\geq N_0$ 
\begin{equation}
\mu_{P_N}(\theta_\infty) \geq k(\theta_\infty,\theta_\infty) -\delta/3 = C-\delta/3.  \label{eq:lbpn}
\end{equation}
On the other hand, using Eqs.(\ref{eq:outof}) and (\ref{eq:unifbound}), we have
\begin{equation}
\max_{\theta\in\Theta\backslash U_\varepsilon(\theta_\infty)} \mu_{P_N}(\theta)  
\leq C-\frac{2}{3}\delta \label{eq:outof2}
\end{equation}
for all $N\geq N_0$.  

Eqs.(\ref{eq:lbpn}) and (\ref{eq:outof2}) show that the maximum of $\mu_{P_N}$ is attained in $U_\varepsilon(\theta_\infty)$, that is, $\theta_N \in U_\varepsilon(\theta_\infty)$, for all $N\geq N_0$, which completes the proof. 
\end{proof}

\begin{remark}
For simplicity, we assume in Proposition \ref{pr:consistency2} that $\theta$ is compact, but this condition can be relaxed.
For example, we may instead assume the following weaker condition:
For any open neighborhood $U$ of $\theta_\infty$, there is a positive constant $\delta$ such that
$\sup_{\theta\in \Theta\backslash U} k(\theta,\theta_\infty) \leq k(\theta_\infty,\theta_\infty) - \delta$.
\end{remark}

\section{Demonstration of the auto-correction mechanism for a misspecified prior}

\begin{figure}[H]
\vspace{.3in}
\begin{minipage}{1.0\hsize}
\begin{center}
\includegraphics[scale = 0.5]{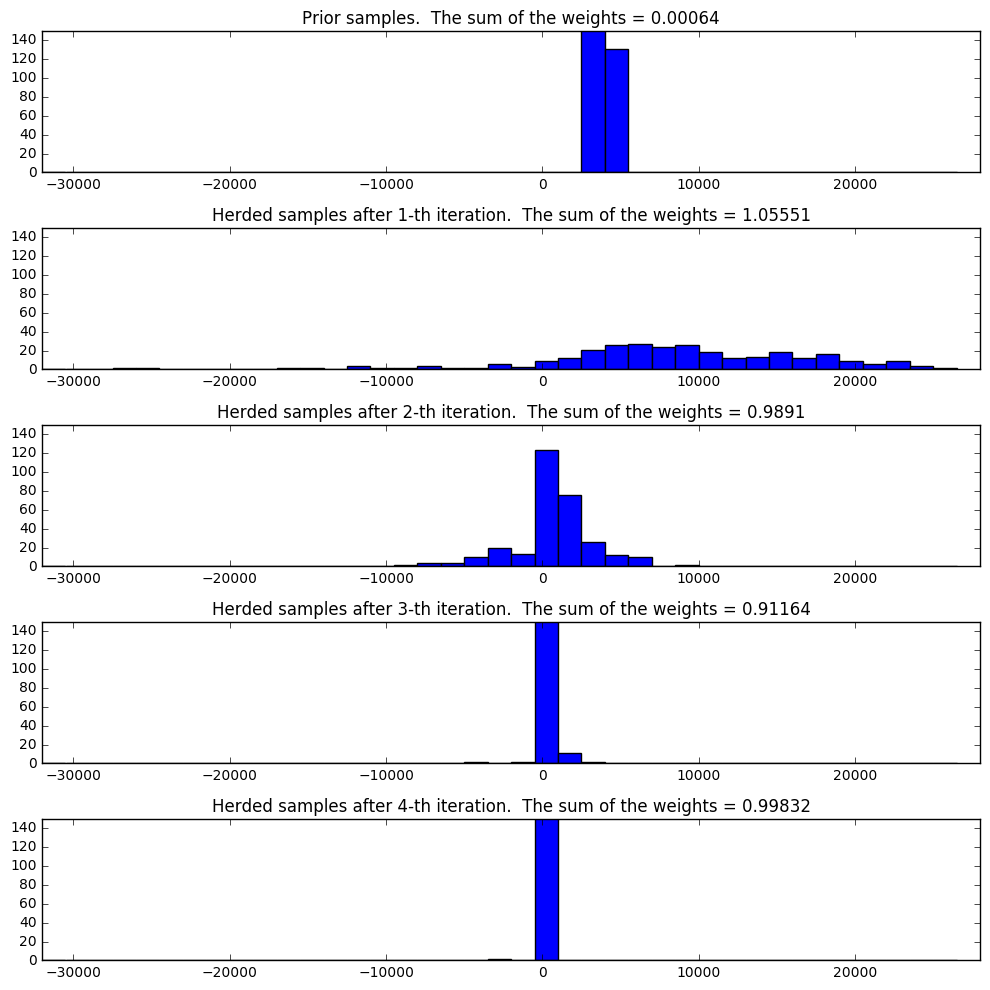}
\end{center}
\end{minipage}
\vspace{.3in}
\caption{Each figure shows a histogram of simulated parameters for the mean of the Gaussian distribution in each iteration, as produced with the proposed method. ``The sum of the weights'' on the top of each figure is the sum of the weights given by kernel ABC at each iteration, as defined by Eq.~(3) of the main text.}
\label{fig:mv_incorrect}
\end{figure}

We demonstrate here how the auto-correction mechanism of the proposed method works; for an explanation of this mechanism, see Section 3 of the main text.
We performed an experiment similar to the one in Section 4.2 of the main text, but under a simpler setting.
The task was to estimate the mean $0$ of a univariate Gaussian distribution, ${\rm Normal}(0,40)$, provided 100 i.i.d.~observations from it.
The variance $40$ was assumed to be known.
For the prior distribution over the mean, we used the uniform distribution on $[2000,3000]$, which is severely misspecified.
For the proposed method, we recomputed the bandwidth of a Gaussian kernel for each iteration, by using the median heuristic with simulated data.
In each iteration, 300 pseudo-observations were generated for the proposed method.

Figure \ref{fig:mv_incorrect} shows the results for the first 4 iterations.
The top figure is a histogram of the parameters generated from the prior distribution, which, because of the misspecification of the prior, do not cover the true mean $0$.
The resulting sum of the weights is $0.00064$, implying that the simulated pseudo-observations are far apart from the observed data.
(As explained in the caption of Figure \ref{fig:mv_incorrect}, ``The sum of the weights'' on the top of each figure is the sum of the weights $w_1,\dots,w_n$ given by kernel ABC at each iteration, as defined by Eq.~(3) of the main text.)
The second figure is a histogram of the parameters generated by kernel herding in the first iteration.
These parameters were generated so as to explore the parameter space, in response to the auto-correction mechanism explained in Section 3 of the main text. 
Since the simulated parameters were now scattered around the true mean 0, kernel ABC began to perform well from the next iteration.
After only 4 iterations, the simulated parameters concentrated around the true mean.


\section{Supplementary to the population dynamics experiment in Section 4.3}

We offer here supplementary materials for the experiment on the blowfly population dynamics in Section 4.3 of the main text.

\subsection{Errors for individual parameters}

\begin{table}[H]
   \caption{Results for blowfly population dynamics in Sec. 4.3}
    \begin{center}
    \begin{adjustbox}{max width=430pt}
  \begin{tabular}{|l|c|r|r|r|r|r|r|r|r|} \hline
    Algorithm & $P$ & $N_{0}$ & $\sigma_{d}$ & $\sigma_{p}$ & $\tau$ & $\delta$ & data & cputime\\ \hline 
    KR-ABC & 0.28(0.13) & 0.03(0.05) & 0.93(0.55) & 1.22(0.64) & 0.17(0.14) & 0.17(0.15) & 43.85(37.24)  &101.143(13.25) \\ \hline
    KR-ABC (less) & 0.15(0.13) & 0.10(0.07) &  1.11(0.12) &
    1.45(0.26) & 0.23(0.41) & 0.27(0.45) & 67.57(47.11) & 32.98(1.21) \\ \hline
    K2-ABC & 1.27(1.75) & 0.20(0.23) & 0.98(0.42) & 1.46(1.10) &
       0.31(0.19) & 0.61(0.82) & 67.45(77.86) &  23.47(1.59) \\ \hline
    K-ABC & 0.48(0.13) & 0.14(0.06) & 1.28(0.87) & 1.42 (0.40) & 0.22 (0.02) &
       0.27(0.25) & 89.37(29.22) &  30.66(2.57)  \\ \hline
    SMC-ABC (mean) & 0.58 (0.15) & 0.11(0.06) & 1.03(0.46) & 1.98(0.32) & 0.28(0.15) &
        1.01(0.11) & 170.41 (47.91) &  38.50(2.34) \\ \hline
	SMC-ABC (MAP) & 0.51(0.28) & 0.19(0.10) & 0.89(0.33) & 1.89(0.33) & 0.53(0.46) &
       1.01(0.10) & 163.19(42.51) &  38.50(2.34) \\ \hline
    ABC-DC & 0.48(0.22) & 0.25(0.13) & 1.36(0.88) & 1.55(0.12) & 0.54(0.31) &
        1.17(0.11) & 134.12(58.92) &  29.94(4.57)\\ \hline
    BO &0.83 (0.84) & 0.16(0.24) & 1.44(0.76) & 1.09(0.50) & 0.22(1.05) &
       0.45(0.41) & 108.18(67.08) & 3217.40(157.31)\\ \hline
    MSM & 0.65(0.16) & 0.26(0.19) & 1.50(0.59) & 1.01(0.57)& 0.14(0.13) &
        0.51(0.15)  & 89.17(33.20) &  25.46(8.26)\\ \hline
   
  \end{tabular}
  \end{adjustbox}
  \label{tab:table-sup-blowfly}
  \end{center}
  \vspace{-0.25in}
\end{table}

Table \ref{tab:table-sup-blowfly} shows the separate errors made by each method for individual parameters. This was omitted from the main text due to space constraints.

\subsection{Prior distribution for the parameters of the blowfly population dynamics}

We describe here the prior distribution for the parameters $\theta := (P \in \mathbb{N},\ N_{0} \in \mathbb{N},\ \sigma_{d} \in \mathbb{R}_{+},\ \sigma_{p} \in \mathbb{R}_{+},\ \tau \in \mathbb{N},\ \delta\ \in \mathbb{R}_{+})$ in the blowfly population dynamics, the parameters that we used in our experiment.
Let $\epsilon_p,\epsilon_{N_0}, \epsilon_{\sigma_d},\epsilon_{\sigma_p},\epsilon_{\tau},\epsilon_{\delta} \sim \rm{Normal}(0,1)$ be independent standard Gaussian random variables.
The prior can then be specified by defining the parameters as such random variables as
\begin{eqnarray*}
P &=&  \exp(2 + 2 \epsilon_p),\\
N_0 &=& \exp(5 + 0.5 \epsilon_{N_0}),\\
\sigma_d &=& \exp(-0.5 +  \epsilon_{\sigma_d}),\\
\sigma_p &=& \exp(-0.5 +  \epsilon_{\sigma_p}),\\
\tau &=& \exp(2 + \epsilon_{\tau}),\\
\delta &=& \exp(-1 + 0.4 \epsilon_{\delta}).
\end{eqnarray*}
Note that the parameters $P, N_0, \tau$ are to be rounded appropriately, as they are defined as being natural numbers.

\section{Supplementary materials for the experiments on alpha stable distributions in Section 4.4}
\subsection{Computation time}

We offer here supplementary materials for the experiment on multivariate alpha stable distributions in Section 4.4 of the main text.

\begin{figure}[H]
\vspace{.3in}
\begin{center}
\includegraphics[scale = 0.2]{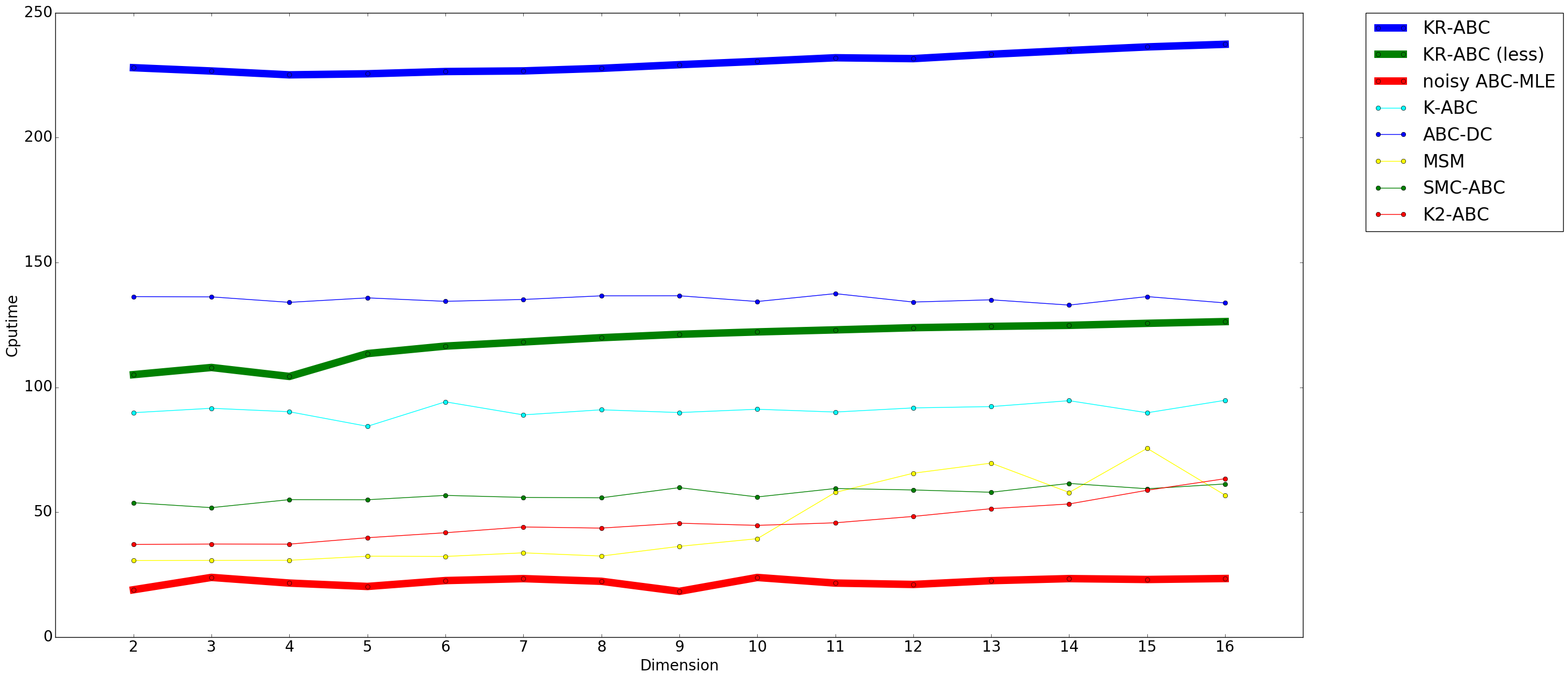}
\end{center}
\label{fig:alpha_cpu}
\caption{Computation time (in seconds) for the experiments in Section 4.4. We omit here the computation time of Bayesian optimization, but it was more than 1500 seconds for all the dimensions.}
\end{figure}

Figure \ref{fig:alpha_cpu} shows computation time for each method in the experiments on multivariate alpha stable distributions in Section 4 of the main text, which information was omitted from the main body because of space constraints.


\subsection{Definition of the deterministic map for sampling}
We describe here the deterministic map $\tau_\theta$ used in sampling multivariate alpha stable distributions \citep{chambers1976method}, where $\theta := (\alpha, \beta, \mu, \sigma) \in (0,2] \times [-1,1] \times \mathbb{R} \times [0,\infty)$.
Given $U_1 \sim {\rm Unif}(-\pi/2, \pi/2)$ and $U_2 \sim {\rm Exp}(1)$, the mapping $\tau_\theta(U_1,U_2) \in \R$ is defined as
\begin{eqnarray*}
\tau_{\theta}(U_1,U_2) := \sigma\tau_{\alpha,\beta}(U_1,U_2) + \mu,
\end{eqnarray*}
where

~~~~~~~~~~~~$\tau_{\alpha,\beta}(U_1,U_2) := \left\{ 
\begin{tabular}{@{}l@{}}
$S_{\alpha,\beta}\frac{\sin[\alpha(\it{U_1} + B_{\alpha,\beta})]}{[\cos(\it{U_1})]^{1/\alpha}} (\frac{\cos[\it{U_1} - \alpha(U_1 + B_{\alpha,\beta})]}{U_2})^{(1-\alpha)/\alpha}, ~~ \alpha \neq 1$ \\
$X = \frac{2}{\pi}[(\frac{\pi}{2} + \beta\it{U_1})\tan\it{U_1} - \beta\rm{log}(\frac{\it{U_2}\cos\it{U_1}}{\frac{\pi}{2}+\beta\it{U_1}})], ~~ \alpha = 1.$
\end{tabular} \right\}$\\
\begin{eqnarray*}
B_{\alpha,\beta} := \frac{\tan^{-1}(\beta\tan\frac{\pi\alpha}{2})}{\alpha},~~~~~S_{\alpha,\beta} := \left(1+\beta^2\tan^2\frac{\pi\alpha}{2} \right)^{1/2\alpha} .
\end{eqnarray*}

\section{Supplementary material for the pedestrian simulator experiment in Section 4.6}

We present here supplementary material w.r.t. the pedestrian simulator experiment in Section 4.6.

\subsection{Example of simulation results obtained with CrowdWalk}

\begin{figure}[h]
\centerline{\includegraphics[scale = 0.65]{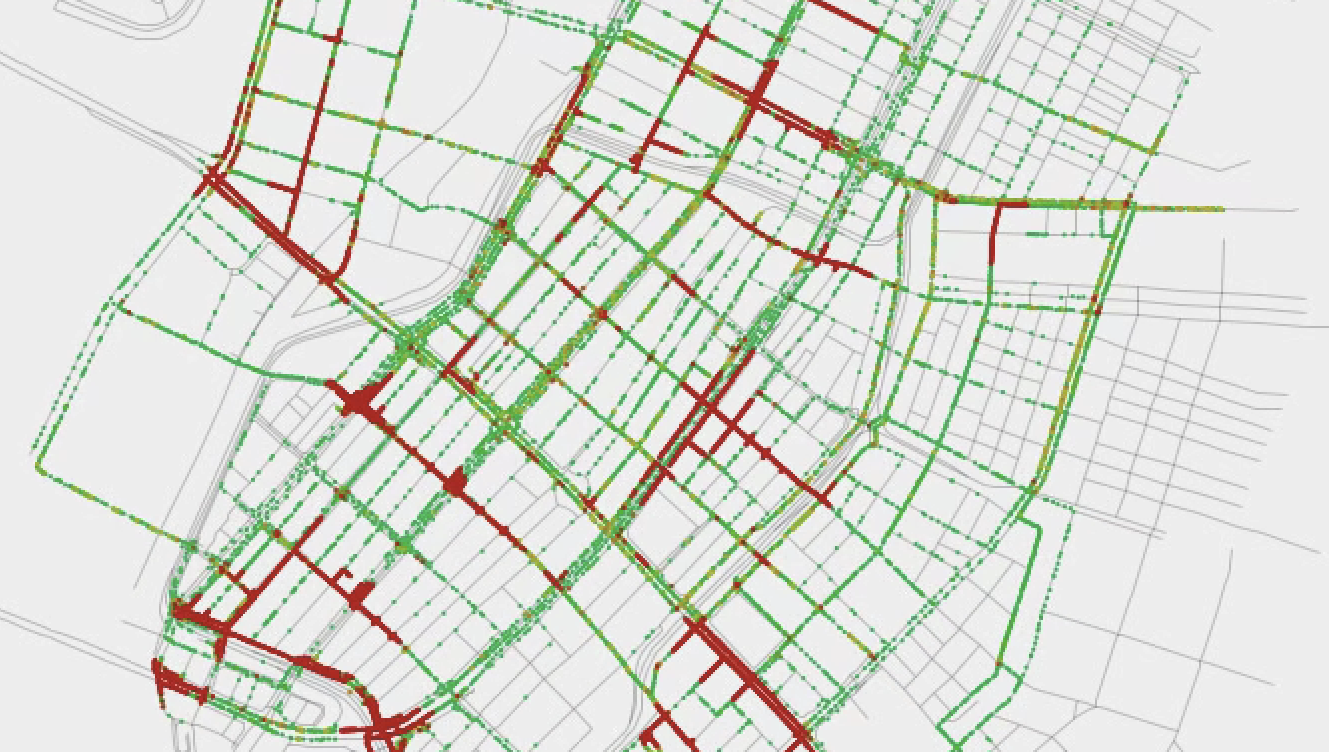}}
\caption{Example of simulation results obtained with Crowdwalk}
\label{fig:figure-crowdwalk}
\end{figure}

Figure \ref{fig:figure-crowdwalk} shows an example of simulation results with the pedestrian flow simulator CrowdWalk \citep{Noda2010p}.
The map is of Ginza, one of the largest commercial districts in Tokyo.
Both green and red points indicate pedestrians, each of which is moving at an individual speed.
Red points are pedestrians who are walking particularly slowly; these pedestrians are forced to walk slowly because the areas in which they are walking are crowded.

\subsection{Parameters for the pedestrian simulator}

\begin{table}[H]
  \caption{Parameters for the pedestrian simulator, including the fixed ones}
  \begin{center}
  \begin{tabular}{|c||c|c|c|c|c|c|} \hline
Group & $\theta^{(S)}$ & $\theta^{(G)}$ & $\theta^{(P)}$ & $\theta^{(R)}$ & $\theta^{(N)}$ & $\theta^{(T)}$ \\\hline \hline
1 & 1	&0 & 15, 29	& 5400, 1800	&100	&30 \\\hline
2 & 5	&2 & 24, 43	& 5400, 5400	&100	&60 \\\hline
3 & 8	&3 & 28, 48	& 5400, 5400	&100	&90 \\\hline
4 & 4	&7 & 2, 0	& 1800, 3600	&100	&120 \\\hline
5 & 3	&2 & 8, 9	& 5400, 5400	&100	&150 \\\hline
6 & 6	&9 & 26, 14	& 5400, 3600	& 0	& 180 \\\hline
7 & 10 &11 & 4, 41	& 1800, 3600	& 0	&210 \\\hline
8 & 2	&9 & 50, 18	& 1800, 3600	&0	&240 \\\hline
9 & 0	&6 & 40, 33	& 3600, 5400	&0	&270 \\\hline
10 & 11	&5 & 20, 25	& 1800, 3600	&0	&300 \\\hline
  \end{tabular}
  \end{center}
  \label{tb:para-pedestrian}
\end{table}

Table \ref{tb:para-pedestrian} shows the parameters of the 10 candidate groups in a mixture model used for parameter estimation. 
Note that the parameters $\theta^{(N)}$ and $\theta^{(T)}$ were {\em unknown} for each method since they were the parameters to be estimated.
Groups 1 to 5 are the components of the true model, but this fact was also unknown for each method.
The numbers in $\theta^{(S)}$, $\theta^{(G)}$ and $\theta^{(P)}$ indicate certain locations on the map (e.g., the Mitsukoshi Department Store, the Apple Store, and Ginza Station), which are predefined in terms of two-dimensional coordinates. 
The parameter $\theta^{(P)}$ indicates certain places where pedestrians in a single group visit. 
In this experiment, pedestrians in each group visited 2 intermediate places during the travel from the starting location to the goal; $\theta^{(R)}$ represent the respective durations of time (in seconds) at the intermediate places.
(Note that the units for starting time $\theta^{(T)}$ are in minutes.)

\subsection{Estimated parameters with the proposed method}
Tables \ref{tab:Table4} and \ref{tab:Table5} show the estimated values for the parameters $\theta_i^{(N)}$ and  $\theta_i^{(T)}$, respectively, for each of independent 20 trials.
Recall that $i$ in $\theta_i^{(N)}$ and  $\theta_i^{(T)}$ is the index of 10 groups, i.e,, $i = 1,\dots,10$.
Results show that the proposed method was able to estimate the parameters of the 5 true groups in most cases.
Note that the estimated values of $\theta_i^{(T)}$ for $i = 6,\dots,10$ were rather arbitrary. This is reasonable since the corresponding numbers of pedestrians $\theta_i^{(N)}$ in these groups were estimated to be zero or very small, and thus these groups could be treated as being nonexistent.

\begin{table}[h]
  \caption{Estimated values of $\theta_i^{(N)}$ with KR-ABC for each of 20 trials}
  \label{tab:Table4}
  \begin{center}
  \scalebox{0.95}{
  \begin{tabular}{|l||l|c|r|r|r|r|r|r|r|r|} \hline 
Trial & $\theta_1^{(N)}$ & $\theta_2^{(N)}$ & $\theta_3^{(N)}$ & $\theta_4^{(N)}$ & $\theta_5^{(N)}$ & $\theta_6^{(N)}$ & $\theta_7^{(N)}$ & $\theta_8^{(N)}$ & $\theta_9^{(N)}$ & $\theta_{10}^{(N)}$\\\hline \hline
1&  95	&132 &102	&1	&79	&32&	2	&9	&13	&31\\\hline
2&100	&105 &105	&84	&100&	0	&0&	0	&0	&0\\\hline
3&98	&102 &90	&81	&101	&0	&16	&0	&7	&0\\\hline
4&93	&98	 &99	&101&	95&	0	&0	&0	&0	&9\\\hline
5&103	&12  & 84	&9	&88&	27&	2	&33&	19&	3\\\hline 
6&87	&105	&108	&100	&92	&0	&6	&0	&0	&0\\\hline
7&90	&102	&104	&102&	97&	0	&0&	0&	0&	0\\\hline
8&116	&79	&93	&118&	88	&0&	0	&0&	0&	1\\\hline
9&97	&91&	101	&110&	94&	0&	0&	1&	0	&0\\\hline
10&102	&127&	0	&0	&97	&0	&38&	70	&51&	12\\\hline
11&102	&105	&94&	87	&100&	0&	0&	0&	0&	7\\\hline
12&0	&2&	100&	228&	103&	0&	0&	0&	1&	64\\\hline
13&98	&97&	96&	108&	95&	0&	0&	0&	1&	0\\\hline
14&103	&98&	94	&94	&102&	3&	0&	0&	0&	1\\\hline
15&105	&176&	106&	0&	9&	78	&2&	14&	3&	2\\\hline
16&96	&134&	100&	0&	95&	0&	70&	0&	0&	0\\\hline
17&98	&106&	96&	87&	98&	4	&1&	4&	0&	2\\\hline
18&798	&101&	97&	97&	98&	0&	2&	1&	1&	0\\\hline
19&109&	54&	55	&181&	68&	0&	0&	31&	0&	0\\\hline
20&98	&90	&101&	102&	99&	3&	2	&0&	0&	0\\\hline	
  \end{tabular}}
  \end{center}
\end{table}
\begin{table}[H]
  \caption{Estimated values of   $\theta_i^{(T)}$ with KR-ABC for each of 20 trials}
  \label{tab:Table5}
  \begin{center}
  \scalebox{1.0}{
  \begin{tabular}{|l||l|c|r|r|r|r|r|r|r|r|} \hline
Trial & $\theta_1^{(T)}$ & $\theta_2^{(T)}$ & $\theta_3^{(T)}$ & $\theta_4^{(T)}$ & $\theta_5^{(T)}$ & $\theta_6^{(T)}$ & $\theta_7^{(T)}$ & $\theta_8^{(T)}$ & $\theta_9^{(T)}$ & $\theta_{10}^{(T)}$\\\hline \hline
1& 35 &	56& 	63& 	289 &	158 &	91 &	252& 	216& 	325 &	209\\\hline
2&25& 	52& 	87& 	121& 	152 &	186 &	152& 	22 &	193 &	460\\\hline
3&28 &	53 &	93 &	119 &	145 	&110 	&89& 	201 &	146 &	18\\\hline
4&29& 	60& 	92& 	120 &	147 &	356 &	188& 	147 &	249 &	0\\\hline
5&27 &	66 &	88& 	229 &	138 &	85 &	130& 	54& 	181 &	236\\\hline
6&22 &	53 	&85 	&121 &	151 	&338 	&208& 	309 	&31 	&135\\\hline
7&33& 	61& 	91 &	120 &	147 &	2 	&175& 	214& 	124 &	396\\\hline
8&26& 	68 &	95 &	129 &	136 &	25 &	375& 	161 &	81& 	266\\\hline
9&26& 	59& 	91 &	125 &	157 &	18 &	373& 	0 	&251 	&0\\\hline
10&30 &53 &452& 	243 &	151 	&285 	&69 &	89 	&175 &	216\\\hline
11&30& 	57 &	92 &	111 &	153 	&279 &	158& 	369& 	273 &	169\\\hline
12&213 &	173 &	92 	&125 &	154 &	130 	&77& 	0 &	456 &	0\\\hline
13&29& 	54 &	93 &	125 &	152 &	346 &	294 	&327& 	214 &	490\\\hline	
14&34& 	59&   89&	116& 	150&  275&  36&	490&  37&	109\\\hline
15&28& 	54& 	88& 	135 &	362& 	70& 	0 &	319 &	456 &	0\\\hline
16&29& 	50& 	88 &	285 &	146 &	0 &	403 	&98 &	286 &	300\\\hline
17&27& 	54& 	86 &	121 &	149 &	22 &	72 &	310& 	21 &	93\\\hline
18&30& 	59& 	89 &	119 &	146 &	221& 	107& 	361 &	218 &	260\\\hline
19&27 &	74 &	101 &	119 	&123 &	211 &	272 &	206 &	265 &	275\\\hline
20&30 &	54 &	87 &	127 &	146 	&286 	&452 	&199& 	267 &	119\\\hline
  \end{tabular}}
  \end{center}
\end{table}

\section{Linear time estimator for the energy distance}

In a way similar to that with \citet[Section 6]{gretton2012kernel}, we define here a linear-time estimator for the energy distance \citep{szekely2013energy}.
Let $x_1\dots,x_n \sim P$ and $y_1,\dots,y_n \sim Q$ be i.i.d.~samples from the two distributions $P$ and $Q$, and let $n_2 := \lfloor n/2 \rfloor$.
The linear estimator can then be defined as
$$
\frac{1}{n_2} \sum_{i = 1}^{n_2} h( (x_{2i-1}, y_{2i-1}), (x_{2i}, y_{2i}) ),
$$
where 
$$ h( (x_{2i-1}, y_{2i-1}), (x_{2i}, y_{2i}) )  
:= \| x_{2i-1} - y_{2i} \| + \| x_{2i} - y_{2i-1} \|  - \| x_{2i-1} - x_{2i} \| - \| y_{2i-1} - y_{2i} \| .
$$
It can be easily shown that this is unbiased and converges to the population energy distance between $P$ and $Q$ at a rate of $O_p(n^{-1/2})$, as \citet[Theorem 15]{gretton2012kernel} showed for a linear estimator in MMD. 
The above linear estimator can be computed at a cost of $O(n)$, which is less than the cost of $O(n^2)$ required for an ordinary quadratic estimator. (Note, however, that the linear estimator has higher variance than a quadratic one.)

\end{document}